\newcommand\SmallMatrix[1]{{%
    \tiny\arraycolsep=0.3\arraycolsep\ensuremath{\begin{pmatrix}#1\end{pmatrix}}}}
\title{Interaction-Force Transport Gradient Flows}
\author{%
Egor Gladin\\
    Humboldt University of Berlin \\
    Berlin, Germany \\
    \& HSE University\\
  \texttt{egorgladin@yandex.ru} \\
  \And
  Pavel Dvurechensky\\
   Weierstrass Institute for\\
   Applied Analysis and Stochastics \\
   Berlin, Germany \\
  \texttt{pavel.dvurechensky@wias-berlin.de} \\
  \AND
  Alexander Mielke \\
  Humboldt University of Berlin\\
  \& WIAS\\
  Berlin, Germany\\
  \texttt{alexander.mielke@wias-berlin.de} \\
  \And
  Jia-Jie Zhu\thanks{Corresponding author: Jia-Jie Zhu}\\
   Weierstrass Institute for\\
   Applied Analysis and Stochastics \\
   Berlin, Germany \\
  \texttt{jia-jie.zhu@wias-berlin.de} \\
}
\renewcommand{\cite}{\citep}
\newtheorem{theorem}{Theorem}[section]
\newtheorem{proposition}[theorem]{Proposition}
\newtheorem{corollary}[theorem]{Corollary}
\newtheorem{remark}[theorem]{Remark}
\newcommand{\Mplus}{\mathcal{M}^+}
\newcommand{\ours}{\ensuremath{\mathrm{IFT}}\xspace}
\newcommand{\rkhs}{\mathcal{H}}
\newcommand{\XX}{\boldsymbol{X}} %
\begin{document}

\newpage
\maketitle

\begin{abstract}
This paper presents a new gradient flow dissipation geometry over non-negative and probability measures. This is motivated by a principled construction that combines the unbalanced optimal transport and interaction forces modeled by reproducing kernels. Using a precise connection between the Hellinger geometry and the maximum mean discrepancy (MMD), we propose the interaction-force transport (IFT) gradient flows and its spherical variant via an infimal convolution of the Wasserstein and spherical MMD tensors. We then develop a particle-based optimization algorithm based on the JKO-splitting scheme of the mass-preserving spherical IFT gradient flows. Finally, we provide both theoretical global exponential convergence guarantees and improved empirical simulation results for applying the IFT gradient flows to the sampling task of MMD-minimization. Furthermore, we prove that the spherical IFT  gradient flow enjoys the best of both worlds by providing the global exponential convergence guarantee for both the MMD and KL energy.
\end{abstract}

\section{Introduction}
Optimal transport (OT) distances between probability measures, including the earth mover's distance \cite{werman1985distance,rubner2000earth} and Monge-Kantorovich or Wasserstein distance \cite{villani2008optimal}, are one of the cornerstones of modern machine learning as they allow performing a variety of machine learning tasks, e.g., unsupervised learning \cite{arjovsky2017wasserstein,bigot2017geodesic}, semi-supervised learning \cite{solomon2014wasserstein}, clustering \cite{ho17multilevel}, text classification \cite{kusner2015from}, image retrieval, clustering and classification \cite{rubner2000earth,cuturi2013sinkhorn,sandler2011nonnegative},
and distributionally robust optimization \cite{sinhaCertifyingDistributionalRobustness2020,mohajerinesfahaniDatadrivenDistributionallyRobust2018}.
Many recent works in machine learning adopted the techniques from PDE gradient flows over optimal transport geometries
and interacting particle systems for inference and sampling tasks.
Those tools not only add new interpretations to the existing algorithms, but also provide a new perspective on designing new algorithms.

For example,
the classical Bayesian inference framework minimizes
the Kulback-Leibler divergence 
towards a target distribution $\pi$.
From the optimization perspective,
this can be viewed as solving
\begin{align}
\min_{\mu\in A\subset \cal P}
   \big\{ F(\mu ) := \mathrm{D}_\textrm{KL}(\mu| \pi)\big\},
   \label{eq:kl-inference}
\end{align}
\begin{wrapfigure}{r}{0.47\textwidth}
    \centering
    \includegraphics[width=\linewidth]{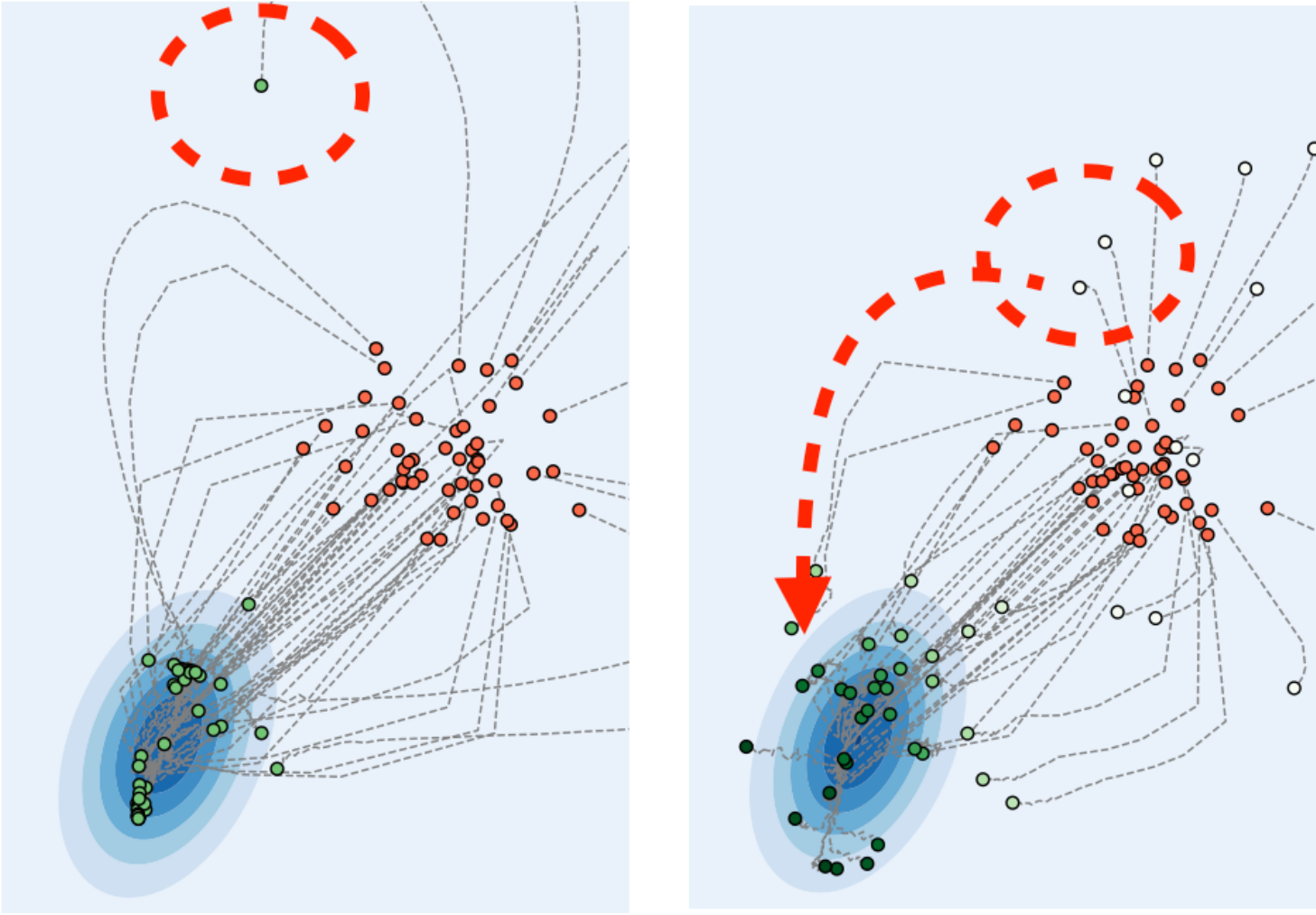}
    \caption{(Left) Wasserstein flow of the MMD energy ~\citep{arbel_maximum_2019}. Some particles get stuck at points away from the target.
    (Right) \ours gradient flow (this paper) of the MMD energy. Particle mass is teleported to close to the target, avoiding local minima. Hollow circles indicate particles with zero mass.
    The red dots are the initial particles, and the green dots are the target distribution.
    See \S\ref{sec:numerical-example} for more details.
    }
    \label{fig:teleport-intro}
\end{wrapfigure}
where $A$ is a subset of the space of probability measures $\cal P$, \eg the Gaussian family.
The Wasserstein gradient flow of the KL gives the Fokker-Planck equation, which can be simulated using the
Langevin SDE for MCMC.
Beyond the KL,
many researchers following
\citet{arbel_maximum_2019} advocated using the squared MMD instead as the driving energy
for the Wasserstein gradient flows for sampling.
However, in contrast to the KL setting,
there is little sound convergence analysis for the MMD-minimization scheme like the celebrated \BE Theorem.
Furthermore, it was shown, \eg in \citep{korbaKernelSteinDiscrepancy2021},
that \citet{arbel_maximum_2019}'s algorithm suffers a few practical drawbacks.
For example,
their
particles tend to collapse around the mode or get stuck at local minima,
and the algorithm requires a heuristic noise injection strategy that is tuned over the iterations;
see 
Figure~\ref{fig:teleport-intro} 
and
\S\ref{sec:numerical-example} for illustrations.
Subsequently, many such as \citet{carrillo2019blob,chewiSVGDKernelizedWasserstein2020,korbaKernelSteinDiscrepancy2021,glaserKALEFlowRelaxed2021,craig_blob_2023,hertrich2023generative,neumayer2024wasserstein}
proposed modified energies to be used in the Wasserstein gradient flows.
In contrast, this paper does not propose new energy objectives.
Instead,
we propose a new gradient flow geometry -- the \ours gradient flows.
To summarize, our main {contributions} are:
\begin{enumerate}[noitemsep, topsep=0pt, leftmargin=*]
    \item We propose the
    interaction-force transport (\ours)
    gradient flow geometry over non-negative measures and spherical 
    \ours over probability measures, constructed from the first principles of the reaction-diffusion type equations, previously studied in the context of the Hellinger-Kantorovich (Wasserstein-Fisher-Rao) distance and gradient flows.
    It was first studied by three groups including~\citet{chizatInterpolatingDistanceOptimal2018,chizat_unbalanced_2019,liero_optimal_2018,kondratyevNewOptimalTransport2016,gallouet2017jko}.
    Our \ours gradient flow is based on the inf-convolution of the Wasserstein and 
    the newly constructed spherical MMD Riemannian metric tensors.
    This new unbalanced gradient flow geometry allows teleporting particle mass in addition to transportation,
    which avoids the flow getting stuck at local minima; see Figure~\ref{fig:teleport-intro} for an illustration.
    \item We provide 
    theoretical analysis such as the 
    \emph{global exponential decay} of energy functionals
    via the Polyak-\Loj type functional inequalities.
    As an application,
    we provide the first global exponential convergence analysis of \ours for both
    the MMD and KL energy functionals.
    That is, the \ours gradient flow enjoys the best of both worlds.
    \item We provide a new algorithm for the implementation of the \ours gradient flow.
    We then empirically demonstrate the use of the \ours gradient flow for the MMD inference task.
    Compared to the original MMD-energy-flow algorithm
    of \citet{arbel_maximum_2019}, \ours flow does not suffer issues
    such as the collapsing-to-mode issue.
    Leveraging the first-principled spherical \ours gradient flow, 
    our method does not require a heuristic noise injection that is commonly tuned over the iterations in practice; see ~\citep{korbaKernelSteinDiscrepancy2021} for a discussion.
    Our method can also be viewed as addressing a long-standing issue of the kernel-mean embedding methods~\citep{smolaHilbertSpaceEmbedding2007,muandetKernelMeanEmbedding2017,lacoste-julienSequentialKernelHerding} for optimizing the support of distributions.
\end{enumerate}
\textbf{Notation}
We use the notation $\mathcal{P}(\XX), \Mplus(\XX)$ to denote the space of probability and non-negative measures on the closed, bounded, (convex) set $\XX\subset\mathbb R^d$.
The base space symbol $\XX$ is often dropped if there is no ambiguity in the context. We note also that many of our results hold for $\XX=\mathbb R^d$.
In this paper, the first variation of a functional $F$ at $\mu\in\Mplus$ is defined as a function ${\frac{\delta F}{\delta\mu}[\mu] }$
such that
$\frac{\dd }{\dd \epsilon}F(\mu + \epsilon \cdot v) |_{\epsilon=0}
= \int {{\frac{\delta F}{\delta\mu}[\mu] }}(x) \dd v (x)$
for any valid perturbation in measure $v$ such that $\mu + \epsilon \cdot v\in \Mplus$ when working with gradient flows over $\Mplus$ and 
$\mu + \epsilon \cdot v\in \mathcal P$ over $\mathcal P$.
We often omit the time index $t$ to lessen the notational burden, \eg the measure at time $t$, $\mu(t, \cdot)$, is written as $\mu$.
The infimal convolution (inf-convolution) of two functions $f,g$ on Banach spaces is defined as
    $(f\square g)(x) = \inf_{y} \left\{f(y) + g(x-y)\right\}$.
In formal calculation,
we often use measures and their density interchangeably,
\ie$\int f\cdot \mu$ means the integral w.r.t. the measure $\mu$.
For a rigorous generalization of flows over continuous measures to discrete measures, see \citep{ambrosio2008gradient}.
$\nabla_2 k(\cdot, \cdot) $ denotes the gradient w.r.t. the second argument of the kernel.
\section{Background}

\subsection{Gradient flows of probability measures for learning and inference}
Gradient flows are powerful tools originated from the field of PDE.
The intuition can be easily seen 
from the perspective of optimization as solving the variational problem
\begin{align*}
    \min _{\mu\in A\subset \Mplus(\XX)} F(\mu )
\end{align*}
using a ``continuous-time version'' of gradient descent,
over a suitable metric space and, in particular, Riemannian manifold.
Since the seminal works by \citet{otto1996double} and colleagues,
one can view many PDEs as
gradient flows over the aforementioned Wasserstein metric space, canonically denoted as $\left(\mathcal P_2(\XX), W_2\right)$;
see \citep{villani2008optimal,santambrogio_optimal_2015}
for a comprehensive introduction.

Different from a standard OT problem, a gradient flow solution traverses along the path of the fastest dissipation of the energy $F$ allowed by the corresponding 
geometry.
In this paper, we are only concerned with the
geometries with a
(pseudo-)Riemannian structure, such as the Wasserstein, (spherical) Hellinger or Fisher-Rao geometries.
In such cases, a formal Otto calculus can be developed to greatly simplify the calculations.
For example, the Wasserstein Onsager operator (which is the inverse of the Riemannian metric tensor)
$\mathbb K_W(\rho): 
T^*_\rho \Mplus \to T_\rho \Mplus, \xi \mapsto -\DIV(\rho\nabla \xi)$,
where $T_\rho \Mplus$ is the tangent plane of $\Mplus$ at $\rho$ and
$T^*_\rho \Mplus$ the cotangent plane.
Using this notation, a Wasserstein gradient flow equation of some energy $F$ can be written as
\begin{align}
    \dot \mu = -  \mathbb K_W(\mu) \frac{\delta F}{\delta \mu}
    = \DIV(\mu\nabla \frac{\delta F}{\delta \mu})
    .
    \label{eq:wasserstein-gfe}
\end{align}
In essence, many machine learning applications
are about making different choices of the energy $F$ in \eqref{eq:wasserstein-gfe}, \eg the KL, $\chi^2$-divergence, or MMD.
However, Wasserstein and its flow equation~\eqref{eq:wasserstein-gfe} are by no means the only meaningful geometry for gradient flows.
One major development in the field is the Hellinger-Kantorovich a.k.a. the Wasserstein-Fisher-Rao (WFR) gradient flow.
    The WFR gradient flow equation is given by the reaction-diffusion equation,
    for some scaling coefficients
    $\alpha, \beta>0$,
    \begin{align}
        \dot u =  \alpha\cdot \DIV (u\nabla \frac{\delta F}{\delta u}  ) - \beta u \cdot\frac{\delta F}{\delta u}
        .
        \label{eq:wfr-gfe}
    \end{align}
A few recent works have applied WFR to sampling and inference~\citep{yanLearningGaussianMixtures2023,luAcceleratingLangevinSampling2019} by choosing the energy functional to be the KL divergence.

\subsection{Reproducing kernel Hilbert space and MMD}
In this paper, we refer to 
a bi-variate function $k: \XX \times \XX \rightarrow \mathbb{R}$ as a symmetric positive definite kernel if $k$ is symmetric and, for all $n \in \mathbb{N}, \alpha_1, \ldots, \alpha_n \in \mathbb{R}$ and all $x_1, \ldots, x_n \in \XX$, we have $\sum_{i=1}^n \sum_{j=1}^n \alpha_i \alpha_j k\left(x_j, x_i\right) \geq 0$.
$k$ is a reproducing kernel if it satisfies the reproducing property, i.e., for all $x \in \XX$ and all functions in a Hilbert space $f \in \rkhs$, we have $f(x)=\langle f, k(\cdot, x)\rangle_{\rkhs}$.
Furthermore, the space $\rkhs$ is an RKHS if the Dirac functional $\delta_x: \rkhs \mapsto \mathbb{R}, \delta_x(f):=f(x)$ is continuous.
It can be shown that there is a one-to-one correspondence between the RKHS $\rkhs$ and the reproducing kernel $k$.
Suppose the kernel is square-integrable $\|k\|_{L^2_{\rho}}^2:=\int k(x, x) d \rho(x)<\infty$ w.r.t. $\rho\in \mathcal P$.
The integral operator \(\Tkrho: L^2_{\rho} \rightarrow \rkhs\) is defined by
$
\Tkrho g(x):=\int k\left(x, x^{\prime}\right) g\left(x^{\prime}\right) d \rho\left(x^{\prime}\right)$ for $g \in L^2_{\rho}
$.
With an abuse of terminology, we refer to the following composition also as the integral operator
$$\K_\rho:= \ID \circ \Tkrho,\
L^2(\rho) \to L^2(\rho).$$
$\K_\rho$ is compact, positive, self-adjoint, and nuclear; cf. \citep{steinwart2008support}.
To simplify the notation, we simply write $\K$ when $\rho$ is the Lebesgue measure.

The kernel maximum mean discrepancy (MMD)~\cite{gretton2012kernel} emerged as
an easy-to-compute alternative 
to optimal transport
for computing the distance between probability measures,
\ie
    $\displaystyle\mmd^2( \mu, \nu): = \|\K \left(\mu - \nu\right)\|_{\rkhs}^2
    =\int \int k(x, x') \dd (\mu - \nu)(x) \dd (\mu - \nu)(x')$
    ,
where $\rkhs$ is the RKHS associated with the (positive-definite) kernel $k$.
While the MMD enjoys many favorable properties, such as a closed-form estimator
and favorable statistical properties~\cite{tolstikhinMinimaxEstimationKernel2017,tolstikhinMinimaxEstimationMaximum},
its mathematical theory is less developed compared to the Wasserstein space especially in the geodesic structure and gradient flow geometries.
It has been shown by \citet{zhu2024approximation}
that MMD is a (de-)kernelized Hellinger or Fisher-Rao distance
by using a dynamic formulation
\begin{equation}
    \label{eq:bb-formula-mmd}
    \begin{aligned}
        \mmd^2(\mu,\nu)
        =
        \min
\left\{        \int_0^1
        \|  \xi_t \|^2_{\rkhs}
        \dd t
        \
        \middle \vert \ 
         \dot u = - \K^{-1}   \xi_t,
          u(0) =  \mu,
         u(1)=  \nu,\
         \xi_t\in  \rkhs
         \right\}
         .
    \end{aligned}
\end{equation}
Mathematically, we can obtain the MMD geodesic structure if we kernelize the Hellinger (Fisher-Rao) Riemannian metric tensor,
\begin{align}
    \mathbb G_{\mmd} = \K_\mu\circ\mathbb G_\He (\mu), \quad \mathbb K_{\mmd} = \mathbb K_\He (\mu) \circ \K_\mu^{-1}
    ,
    \label{eq:metric-tensor-relation}
\end{align}
noting that the Onsager operator $\mathbb K$ is the inverse of the Riemannian metric tensor $\mathbb K = \mathbb G^{-1}$.
The MMD suffers from some shortcomings in practice, such as the vanishing gradients and kernel choices that require careful tuning; see \eg \cite{feydyInterpolatingOptimalTransport2019}.
Furthermore,
a theoretical downside of the MMD
as a tool for optimizing distributions, and kernel-mean embedding~\cite{smolaHilbertSpaceEmbedding2007,muandetKernelMeanEmbedding2017} in general, is that they do not allow \emph{transport} dynamics.
This limitation is manifested in practice, \eg it is intractable to optimize the location of particle distributions; see e.g. \citep{lacoste-julienSequentialKernelHerding}.
In this paper, we address all those issues.

\section{\ours gradient flows over non-negative and probability measures}
\begin{wrapfigure}{r}{0.55\textwidth}
      \centering
      \includegraphics[width=0.53\textwidth]{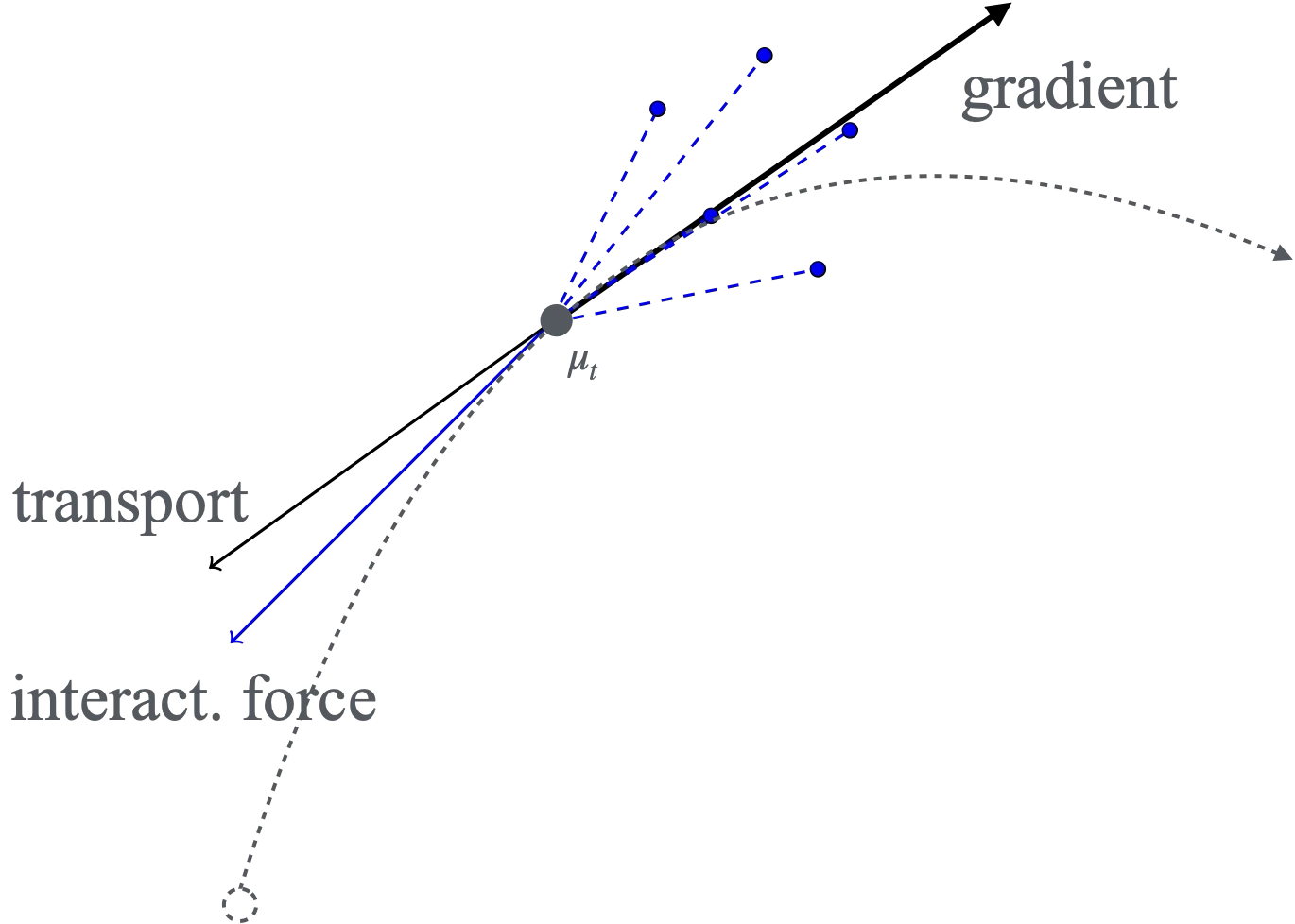}
      \caption[\ours intuition]{Illustration of the \ours gradient flow.    
      Atoms are subject to both the transport (Kantorovich) potential and the interaction (repulsive) force from other atoms.}
      \label{fig:int-force-gf}
\end{wrapfigure}
In this section, we
propose the \ours
gradient flows over 
non-negative and
probability measures.
Note that our 
methodology
is fundamentally
different from a few related works in kernel methods and gradient flows
such as
\cite{arbel_maximum_2019,korbaKernelSteinDiscrepancy2021,hertrich2023generative,glaserKALEFlowRelaxed2021,neumayer2024wasserstein}
in that we are not concerned with the Wasserstein flows of a different energy,
but a new gradient flow dissipation geometry.
\subsection{(Spherical) \ours gradient flow equations over non-negative and probability measures}

    The construction of the 
    Wasserstein-Fisher-Rao gradient flows crucially relies on the inf-convolution from convex analysis~\citep{liero_optimal_2018,chizat2022sparse}.
There, the WFR metric tensor is defined using an inf-convolution of the Wasserstein tensor and the Hellinger (Fisher-Rao) tensor
    $\mathbb G_{\WFR}(\mu) = \mathbb G_{W}(\mu) \square \mathbb G_{\mathrm{He}}(\mu)$.
By Legendre transform, its inverse, the Onsager operator, is given by the sum
    $\mathbb K_{\WFR}(\mu) = \mathbb K_{W}(\mu) + \mathbb K_{\mathrm{He}}(\mu)$.
Therefore, we construct the \ours gradient flow by replacing the Hellinger (Fisher-Rao) tensor with the MMD tensor, as in \eqref{eq:metric-tensor-relation}.
\begin{align}
    \mathbb G_{\ours}(\mu) = \mathbb G_{W}(\mu) \square \mathbb G_{\mmd}(\mu),\quad
    \mathbb K_{\ours}(\mu) = \mathbb K_{W}(\mu) + \mathbb K_{\mmd}(\mu).
    \label{eq:metric-tensor-relation-ift}
\end{align}
The MMD gradient flow equation is
derived by \citet{zhu2024approximation}
using the Onsager operator~\eqref{eq:metric-tensor-relation},
\begin{align}
    \dot \mu = -\mathbb K_{\mmd}(\mu)\dFdmu=-   \K^{-1} \dFdmu.
    \label{eq:mmd-gfe}
\end{align}
Hence, we obtained the desired \ours gradient flow equation using \eqref{eq:metric-tensor-relation-ift}.
\begin{align}
    \label{eq:ikw-gfe-unreg}
    \dot \mu =  
    -\alpha \mathbb K_{W}(\mu)\dFdmu
    -\beta \mathbb K_{\mmd}(\mu)\dFdmu
    =
    {\alpha}\cdot \operatorname{div}(\mu \nabla\dFdmu) -   {\beta} \cdot \K^{-1} \dFdmu.
\end{align}

Formally,
the \ours gradient flow equation
can also be viewed as a kernel-approximation to
the Wasserstein-Fisher-Rao gradient flow equation,
\ie the reaction-diffusion equation~\eqref{eq:wfr-gfe}.
\begin{corollary}
    Suppose $\int k_\sigma(x,\cdot )\dd \mu =1$ and
    the kernel-weighted-measure converges to the Dirac measure $k_\sigma(x, \cdot)\dd \mu \to\dd \delta_x $ as the bandwidth $\sigma\to 0$.
    Then, the
    \ours gradient flow equation~\eqref{eq:ikw-gfe-unreg}
    tends towards
    the WFR gradient flow equation as 
    $\sigma\to 0$,
    \ie the reaction-diffusion equation~\eqref{eq:wfr-gfe}.
    \label{cor:kernel-approx-ours}
\end{corollary}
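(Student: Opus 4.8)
The plan is to compare the two evolution equations term by term and to reduce the whole statement to a single convergence property of the kernel integral operator. Writing the \ours equation~\eqref{eq:ikw-gfe-unreg} and the WFR equation~\eqref{eq:wfr-gfe} side by side (with $u=\mu$), the transport terms $\alpha\cdot\operatorname{div}(\mu\nabla\dFdmu)$ are literally identical and carry no $\sigma$-dependence, so they require no argument. The entire content of the corollary therefore lies in the reaction terms: I must show that the \ours reaction $-\beta\,\mathbb K_{\mmd}(\mu)\dFdmu$ converges to the WFR reaction $-\beta\,\mu\cdot\dFdmu$ as $\sigma\to0$.

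First I would unfold the MMD Onsager operator using the factorization~\eqref{eq:metric-tensor-relation}, namely $\mathbb K_{\mmd}(\mu)=\mathbb K_\He(\mu)\circ\K_\mu^{-1}$, together with the fact that the Hellinger (Fisher--Rao) Onsager operator acts by multiplication against the density, $\mathbb K_\He(\mu)\xi=\mu\,\xi$ (any scalar being absorbed into $\beta$). This gives $\mathbb K_{\mmd}(\mu)\dFdmu=\mu\cdot\big(\K_\mu^{-1}\dFdmu\big)$, so the claim reduces to establishing $\K_\mu^{-1}\dFdmu\to\dFdmu$, i.e. $\K_\mu^{-1}\to\ID$ in a suitable sense, as $\sigma\to0$.

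Next I would prove the companion statement $\K_\mu\to\ID$ directly from the hypotheses. For a sufficiently regular test function $g$ we have $\K_\mu g(x)=\int k_\sigma(x,x')\,g(x')\,\dd\mu(x')$, and the assumed weak convergence $k_\sigma(x,\cdot)\dd\mu\to\dd\delta_x$ yields $\K_\mu g(x)\to g(x)$ pointwise; the normalization $\int k_\sigma(x,\cdot)\dd\mu=1$ guarantees that this approximate identity preserves constants and total mass, so $k_\sigma(x,\cdot)\dd\mu$ is genuinely a probability kernel collapsing to the Dirac. Hence $\K_\mu$ converges to the identity on the relevant class of potentials.

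The hard part will be passing from $\K_\mu\to\ID$ to $\K_\mu^{-1}\to\ID$. As recalled in the background, $\K_\mu$ is compact, positive and nuclear, so its inverse is unbounded and inversion is not a continuous operation; one cannot simply invert the limit. I would therefore interpret the convergence formally, restricting $\dFdmu$ to the range of $\K_\mu$ (equivalently, to smooth potentials on which the approximate identity $\K_\mu$ is well-conditioned), where $\K_\mu^{-1}(\K_\mu h)=h$ and $\K_\mu h\to h$ combine to give $\K_\mu^{-1}\dFdmu\to\dFdmu$. Substituting back, $\mathbb K_{\mmd}(\mu)\dFdmu=\mu\cdot(\K_\mu^{-1}\dFdmu)\to\mu\cdot\dFdmu$, which is exactly the WFR reaction term; combined with the already-matching transport term, this shows that~\eqref{eq:ikw-gfe-unreg} tends to~\eqref{eq:wfr-gfe}. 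This operator-inversion subtlety is precisely why the statement is naturally a \emph{formal} (Otto-calculus level) corollary rather than a quantitative one.
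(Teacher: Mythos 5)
Your proposal is correct and follows essentially the same route as the paper: both reduce the claim to the reaction term, rewrite it as $\mu\cdot\K_\mu^{-1}\dFdmu$ (the paper does this directly from the definitions of $\K$ and $\K_\mu$, you via the factorization~\eqref{eq:metric-tensor-relation}, which is the same identity), and then invoke the approximate-identity hypothesis to conclude formally that $\K_\mu^{-1}\to\ID$. If anything, you are more explicit than the paper about the subtlety of inverting the limit of a compact operator, which the paper glosses over with the word ``formally.''
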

Like the WFR gradient flow over non-negative measures,
the gradient flow equation \eqref{eq:ikw-gfe-unreg} and \eqref{eq:mmd-gfe} are not guaranteed to stay within the probability measure space, i.e., total mass $1$.
This is useful in many applications such as chemical reaction systems.
However, 
probability measures are often required for
machine learning applications.
We now provide a mass-preserving gradient flow equation that we term the \emph{spherical \ours gradient flow}.
The term spherical is used to emphasize that the flow stays within the probability measure, as in the spherical Hellinger distance~\citep{LasMie19GPCA}.

To this end, we must first study \emph{spherical MMD} flows over probability measures.
Recall that \eqref{eq:mmd-gfe} is a Hilbert space gradient flow (see \citep{ambrosio2008gradient}) and does not stay within the probability space.
Closely related, many works using kernel-mean embedding~\citep{smolaHilbertSpaceEmbedding2007,muandetKernelMeanEmbedding2017}
also suffer from this issue of not respecting the probability space.
To produce a restricted (or projected) flow in $\cal P$,
our starting point is the \emph{MMD minimizing-movement scheme} restricted to the probability space

\begin{minipage}[c]{0.59\textwidth}
    \begin{align}
        \label{eq:mmd-mms}
        \mu^{k+1}
        \gets\argmin_{\mu\in\cal P} F(\mu ) + \frac1{2\eta}{\mmd}^2(\mu, \mu^{k})
        .
    \end{align}
\end{minipage}
\begin{minipage}[c]{0.39\textwidth}
    \begin{center}
        \includegraphics[width=\textwidth]{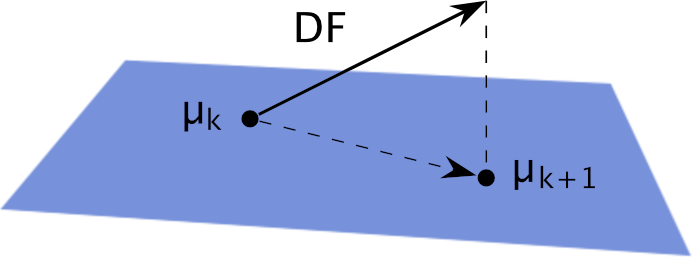}
      \end{center}
\end{minipage}

We now derive the following mass-preserving spherical gradient flows for the MMD and \ours.
\begin{proposition}
    [Spherical MMD and spherical \ours gradient flow equations]
    The spherical MMD gradient flow equation is 
    given by (where $1$ denotes the constant scalar)
    \begin{align}
        \label{eq:spherical-MMD-gfe}
        \dot \mu 
    = - \K^{-1}\left(\dFdmu - \frac{\int \K^{-1}\dFdmu}{\int \K^{-1}1}\right)
        .
    \end{align}
    
    Consequently, 
    the spherical \ours gradient flow equation is
    given by
    \begin{align}
        \label{eq:spherical-ours-gfe}
        \dot \mu =  \alpha\cdot \operatorname{div}(\mu \nabla\dFdmu) 
        -   
        \beta\cdot 
        \K^{-1}\left(\dFdmu - \frac{\int \K^{-1}\dFdmu}{\int \K^{-1}1}\right)
        .
    \end{align}
    Furthermore, those equations are mass-preserving, \ie $\int \dot \mu = 0$.
    \label{prop:spherical-ours-gfe}
\end{proposition}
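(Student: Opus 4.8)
The plan is to derive the spherical MMD flow \eqref{eq:spherical-MMD-gfe} directly from the constrained minimizing-movement scheme \eqref{eq:mmd-mms}, enforcing the membership $\mu\in\cal P$ through a Lagrange multiplier for the mass condition $\int\dd\mu=1$; the spherical \ours flow \eqref{eq:spherical-ours-gfe} then follows by superposing the Wasserstein dissipation term, which is already mass-preserving; and the final mass-preservation claim $\int\dot\mu=0$ is checked by a direct integration.

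First I would form the Lagrangian
\[
\mathcal L(\mu,\lambda) = F(\mu) + \frac1{2\eta}\mmd^2(\mu,\mu^k) - \lambda\Big(\int\dd\mu - 1\Big)
\]
and compute its first variation. Writing $\mmd^2(\mu,\mu^k)=\int\int k(x,x')\dd(\mu-\mu^k)(x)\dd(\mu-\mu^k)(x')$, the first variation of the proximal term is $\tfrac1\eta\K(\mu-\mu^k)$, so stationarity reads $\dFdmu + \tfrac1\eta\K(\mu-\mu^k) - \lambda = 0$. Applying $\K^{-1}$ and rearranging yields $\tfrac1\eta(\mu-\mu^k) = -\K^{-1}(\dFdmu - \lambda)$, and taking the small-step limit $\eta\to 0$ (identifying $\dFdmu$ at the current $\mu$) produces the ambient flow $\dot\mu = -\K^{-1}(\dFdmu - \lambda)$.

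The multiplier $\lambda$ is then pinned down by requiring the flow to remain in $\cal P$, i.e.\ $\int\dot\mu = 0$. By linearity of $\K^{-1}$ and constancy of $\lambda$, one has $\int\dot\mu = -\int\K^{-1}\dFdmu + \lambda\int\K^{-1}1$, which vanishes precisely when $\lambda = \big(\int\K^{-1}\dFdmu\big)\big/\big(\int\K^{-1}1\big)$; substituting this value gives exactly \eqref{eq:spherical-MMD-gfe}. For the spherical \ours equation I would add the Wasserstein Onsager term $\alpha\operatorname{div}(\mu\nabla\dFdmu)$ to this spherical MMD term scaled by $\beta$, as dictated by the additive structure of the Onsager operators in \eqref{eq:metric-tensor-relation-ift}. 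Since $\int\operatorname{div}(\mu\nabla\dFdmu)=0$ by the divergence theorem (no-flux boundary on $\XX$, or decay on $\mathbb R^d$), this extra term contributes nothing to $\int\dot\mu$, so \eqref{eq:spherical-ours-gfe} inherits mass preservation, completing both claims.

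The hard part will be the rigor surrounding the unbounded operator $\K^{-1}$: since $\K$ is compact with eigenvalues accumulating at $0$, its inverse is only densely defined, so $\K^{-1}\dFdmu$, $\K^{-1}1$, and especially $\int\K^{-1}1$ require $\dFdmu$ and the constant function to lie in a suitable domain (the range of $\K$), and one must verify $\int\K^{-1}1\neq 0$ for $\lambda$ to be well defined. Granting these domain assumptions and the boundary/decay conditions needed for the divergence theorem, the remaining steps are routine formal Otto-calculus manipulations.
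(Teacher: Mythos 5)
Your proposal is correct and follows essentially the same route as the paper: a Lagrangian for the mass-constrained minimizing-movement scheme \eqref{eq:mmd-mms}, Euler--Lagrange stationarity, elimination of the multiplier via the mass condition (the paper integrates the discrete optimality condition using $\int\mu=\int\mu^\ell=1$ before letting $\eta\to0$, you impose $\int\dot\mu=0$ after the limit --- formally equivalent), then superposition of the Wasserstein term via the additive Onsager structure and a direct integration for mass preservation. Your closing caveat about the unbounded $\K^{-1}$ is apt and is the issue the paper defers to its remark on the viscosity-regularized inverse $(\K+\epsilon I)^{-1}$.
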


So far, we have identified the gradient flow equations of interest.
Now, we are ready to present our main theoretical results on the convergence of the \ours gradient flow via functional inequalities.
For example, the logarithmic Sobolev inequality (LSI)
\begin{align}
    \bigg\|{\nabla \log \frac{\dd\mu}{\dd\pi}}\bigg\|^2_{L^2(\mu)}\geq c_{\textrm{LSI}}\cdot \mathrm{D}_\mathrm{KL}(\mu|\pi) \textrm{ for some } c_{\textrm{LSI}}>0
    \label{eq:LSI}
    \tag{LSI}
\end{align}
is sufficient to guarantee the convergence of the pure Wasserstein gradient flow of the KL divergence energy, which governs the same dynamics as the Langevin
equation.
The celebrated \BE Theorem~\citep{bakryDiffusionsHypercontractives1985}, is a cornerstone of convergence analysis for dynamical systems as it provides an explicit
sufficient condition: suppose the target measure $\pi$ is $\lambda$-log concave for some $\lambda>0$, then the global convergence is guaranteed, \ie
\begin{align*}
    \pi=e^{-V}\dd x \textrm{ and } \nabla^2 V\geq \lambda\cdot \ID
    \implies \eqref{eq:LSI} \textrm{ with } c_{\textrm{LSI}}=2\lambda
    \implies \textrm{glob. exp. convergence}
    .
\end{align*}
The question we answer below is whether the \ours gradient flow enjoys such favorable properties.
Our starting point is the (Polyak-)\Loj type functional inequality.
\begin{theorem}
    Suppose the
    following
    \Loj type inequality holds
    for some $c>0$,
    \begin{align}
\alpha\cdot        \bigg\|\nabla \dFdmu\bigg\|^2_{L^2_{\mu}}
        +  
        \beta\cdot \bigg\|{\dFdmu}\bigg\|^2_\rkhs\geq c\cdot \left(F(\mu(t)) - \inf_\mu F(\mu)\right)
        \tag{\ours-\L{}oj}
        \label{eq:loj-ours}
    \end{align}
    for the \ours gradient flow, or
    \begin{align}
        \alpha\cdot        \bigg\|\nabla \dFdmu\bigg\|^2_{L^2_{\mu}}
                +  
                \beta\cdot \bigg\|{\dFdmu- \frac{\int \K^{-1}\dFdmu}{\int \K^{-1}1}}\bigg\|^2_\rkhs\geq c\cdot \left(F(\mu(t)) - \inf_\mu F(\mu)\right)
                \tag{S\ours-\L{}oj}
                \label{eq:loj-ours-sph}
            \end{align}
            for the spherical \ours gradient flow.
            Then,
    the energy $F$
    decays exponentially along the
    corresponding 
    gradient flow, \ie $F(\mu(t)) - \inf_\mu F(\mu)\leq e^{-ct} \cdot \left(F(\mu(0)) - \inf_\mu F(\mu)\right)$.
    \label{prop:loj-ours}
\end{theorem}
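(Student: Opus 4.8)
\section*{Proof proposal}

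The plan is to show that the left-hand side of the assumed \L{}ojasiewicz-type inequality is \emph{exactly} the instantaneous dissipation rate $-\frac{\dd}{\dd t}F(\mu(t))$ of the energy along the corresponding flow; once this identification is made, a one-line Gr\"onwall argument yields the exponential bound. First I would differentiate the energy by the chain rule for first variations, $\frac{\dd}{\dd t}F(\mu(t)) = \int \dFdmu\,\dot\mu$, and substitute the \ours gradient flow equation \eqref{eq:ikw-gfe-unreg} in Onsager form $\dot\mu = -\alpha\,\mathbb K_W(\mu)\dFdmu - \beta\,\mathbb K_{\mmd}(\mu)\dFdmu$. This splits the dissipation into a Wasserstein part and an MMD part. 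For the Wasserstein part, using $\mathbb K_W(\mu)\xi = -\DIV(\mu\nabla\xi)$ and integrating by parts (boundary terms vanish on the closed bounded $\XX$, or by decay on $\mathbb R^d$) gives $\int \dFdmu\,\mathbb K_W(\mu)\dFdmu = \|\nabla\dFdmu\|^2_{L^2_\mu}$. For the MMD part, since $\mathbb K_{\mmd}=\K^{-1}$ by \eqref{eq:mmd-gfe} and $\K$ is self-adjoint and positive, I would invoke the identity $\langle f,\K^{-1}f\rangle_{L^2}=\|f\|^2_\rkhs$ relating the inverse integral operator to the RKHS norm, giving $\int \dFdmu\,\K^{-1}\dFdmu = \|\dFdmu\|^2_\rkhs$. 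Together these yield $\frac{\dd}{\dd t}F = -\alpha\|\nabla\dFdmu\|^2_{L^2_\mu} - \beta\|\dFdmu\|^2_\rkhs$, the negative of the \eqref{eq:loj-ours} left-hand side.

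For the spherical flow the only extra step is to verify that the mass-correction term does not change the dissipation. Writing $\phi = \dFdmu$ and $c_0 = \frac{\int \K^{-1}\phi}{\int \K^{-1}1}$, substituting \eqref{eq:spherical-ours-gfe} produces the MMD contribution $\int \phi\,\K^{-1}(\phi - c_0)$. I would show this equals $\|\phi-c_0\|^2_\rkhs$ by noting that the difference $\int\phi\,\K^{-1}(\phi-c_0) - \int(\phi-c_0)\K^{-1}(\phi-c_0) = c_0\int\K^{-1}(\phi-c_0)$ vanishes: the very definition of $c_0$ forces $\int\K^{-1}(\phi-c_0) = \int\K^{-1}\phi - c_0\int\K^{-1}1 = 0$. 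Hence $\frac{\dd}{\dd t}F = -\alpha\|\nabla\phi\|^2_{L^2_\mu} - \beta\|\phi-c_0\|^2_\rkhs$, matching the \eqref{eq:loj-ours-sph} left-hand side.

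Finally, setting $G(t) = F(\mu(t)) - \inf_\mu F(\mu)$ (so $\dot G = \dot F$, as $\inf_\mu F$ is constant), the assumed inequality reads $\dot F \le -c\,G$, i.e.\ $\dot G \le -c\,G$. Gr\"onwall's lemma then gives $G(t)\le e^{-ct}G(0)$, which is the claimed global exponential decay for both flows.

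The main obstacle is making rigorous the RKHS-norm identity $\langle f,\K^{-1}f\rangle_{L^2}=\|f\|^2_\rkhs$: the integral operator $\K$ is compact with eigenvalues accumulating at zero, so $\K^{-1}$ is unbounded and the identity holds only for $f$ in its domain (equivalently $f\in\rkhs$, via the Mercer/spectral decomposition of $\K$). Establishing this requires checking that $\dFdmu$ lies in $\rkhs$ along the flow and, in the spherical case, that $\K^{-1}1$ (hence $c_0$) is well-defined --- a regularity assumption implicit in the paper's formal calculus. Beyond this, one must assume enough smoothness of $t\mapsto\mu(t)$ to justify differentiating $F(\mu(t))$ and applying the chain rule, which the paper treats at the formal level.
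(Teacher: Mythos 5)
Your proposal is correct and follows essentially the same route as the paper: identify the dissipation $-\frac{\dd}{\dd t}F$ with the left-hand side of the \L{}ojasiewicz inequality via the chain rule and the Onsager-form flow equation, handle the spherical correction term by noting that $\int \K^{-1}\bigl(\dFdmu - c_0\bigr)=0$ (the paper phrases this as mass preservation, which is the same identity), and conclude by Gr\"onwall. Your closing remarks on the unboundedness of $\K^{-1}$ and the domain condition $\dFdmu\in\rkhs$ correctly flag the formal nature of the calculus, which the paper itself acknowledges in its remark on the viscosity-regularized inverse.
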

To understand a specific gradient flow,
one must delve into the detailed analysis of
the conditions under which
the functional inequalities hold instead of assuming them to hold by default.
We provide such analysis for the \ours gradient flows next.
\subsection{Global exponential convergence analysis}
\label{sec:mmd-energy-convergence}

\paragraph*{MMD energy functional}
As discussed in the introduction, the MMD energy has been proposed as an alternative to the KL divergence energy for sampling by \citet{arbel_maximum_2019}\footnote{
    We do not use the terminology ``MMD gradient flow'' from \citep{arbel_maximum_2019} since it is inconsistent with the naming convention of ``Wasserstein gradient flow'' as Wasserstein refers to the geometry, not the functional.
    \label{footnote:mmd-flow-name}
}, where
they assume the access to samples from the target measure $y_i\sim \pi$.
However, the theoretical convergence guarantees under the MMD energy is a less-exploited topic.
Those authors characterized a local decay behavior under the assumption
that the $\mu_t$ must already be close to the target measure $\pi$ for all $t>0$.
The assumptions they made are not only restrictive, but also difficult to check.
There has also been no global convergence analysis. For example, \citet{arbel_maximum_2019}'s Proposition~2 states that the MMD is non-increasing, which is not equivalent to convergence and is easily satisfied by other flows. The mathematical limitation is that the MMD is in general not guaranteed to be convex along the Wasserstein geodesics.
In addition, our analysis also does not require the heuristic noise injection step as was required in their implementation.
\citet{mroueh_unbalanced_2020}
also used the MMD energy but with a different gradient flow,
which
has been shown by \citet{zhu2024approximation}
to be a kernel-regularized
inf-convolution of the Allen-Cahn and Cahn-Hilliard type of dissipation.
However, \citet{mroueh_unbalanced_2020}'s
convergence analysis is not sufficient for establishing (global) exponential convergence as no functional inequality has been established there.
In contrast, we now provide full global exponential convergence guarantees.

We first provide an interesting property that will become useful for our analysis.
\begin{theorem}
    Suppose the driving energy is the squared MMD,
    $F(\mu)=\frac12\mmd^2(\mu,\pi)$
    and initial datum $\mu_0\in\mathcal P$ is a probability measure.
    Then, the spherical MMD gradient flow equation~\eqref{eq:spherical-MMD-gfe} coincides with the MMD gradient flow equation
    \begin{align}
        \label{eq:spherical-MMD-MMD-gfe}
        \dot \mu &= - \left(\mu-\pi\right),
         \tag{MMD-MMD-GF}
\end{align}
whose solution is 
a linear interpolation between the initial measure $\mu_0$ and the target measure $\pi$, \ie 
$$\mu_t = e^{-t}\mu_0 + (1-e^{-t})\pi.$$

    Furthermore, same coincidence holds for the spherical \ours and \ours gradient flow equation
    \begin{align}
    \label{eq:spherical-MMD-IFT-gfe}
        \dot \mu &= \alpha\cdot \operatorname{div}\left(\mu \int \nabla_2 k(x,\cdot )\dd \left(\mu-\pi\right)(x)\right)
        - \beta \left(\mu-\pi\right)
        .
        \tag{MMD-\ours-GF}
    \end{align}
    \label{thm:spherical-MMD-MMD-gfe}
\end{theorem}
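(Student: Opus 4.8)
The plan is to reduce everything to a single computation of the first variation of $F(\mu)=\tfrac12\mmd^2(\mu,\pi)$ and then to observe that, for this particular energy, the spherical correction term in \eqref{eq:spherical-MMD-gfe} and \eqref{eq:spherical-ours-gfe} vanishes identically. Concretely, I would first establish that $\dFdmu=\K(\mu-\pi)$, so that $\K^{-1}\dFdmu=\mu-\pi$; the decisive point is then that $\int\K^{-1}\dFdmu=\int(\mu-\pi)=0$ because $\mu,\pi\in\mathcal P$ are both probability measures. This kills the mean-shift correction, and the spherical and non-spherical reaction terms collapse to the same linear term $-(\mu-\pi)$, which is what makes the two families of flows coincide.

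For the first statement I would differentiate $F(\mu+\epsilon v)$ at $\epsilon=0$. Using the symmetry $k(x,x')=k(x',x)$, the two cross terms coincide and
\[
\frac{\dd}{\dd\epsilon}F(\mu+\epsilon v)\Big|_{\epsilon=0}=\int\Big[\int k(x,x')\,\dd(\mu-\pi)(x')\Big]\dd v(x),
\]
so $\dFdmu(x)=\int k(x,x')\,\dd(\mu-\pi)(x')=[\K(\mu-\pi)](x)$, i.e.\ $\dFdmu$ lies in the range of $\K$ and $\K^{-1}\dFdmu=\mu-\pi$ is well defined at the formal level. Substituting into \eqref{eq:mmd-gfe} immediately gives $\dot\mu=-(\mu-\pi)$, and substituting into \eqref{eq:spherical-MMD-gfe} gives the same once the numerator $\int\K^{-1}\dFdmu=0$ is used. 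The resulting ODE is linear, so I would write $\mu_t-\pi=e^{-t}(\mu_0-\pi)$, which rearranges to $\mu_t=e^{-t}\mu_0+(1-e^{-t})\pi$; the convex-combination structure also confirms $\mu_t\in\mathcal P$ for all $t$.

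For the \ours statements, the reaction (MMD) part is handled by the identical cancellation, leaving $-\beta(\mu-\pi)$ in both \eqref{eq:ikw-gfe-unreg} and \eqref{eq:spherical-ours-gfe}. For the diffusion (Wasserstein) part I would compute $\nabla\dFdmu$ from $\dFdmu=\K(\mu-\pi)$: writing the free variable as $y$, $\nabla_y\dFdmu(y)=\int\nabla_y k(y,x)\,\dd(\mu-\pi)(x)$, and by kernel symmetry $\nabla_y k(y,x)=\nabla_2 k(x,y)$, which matches the claimed expression $\int\nabla_2 k(x,\cdot)\,\dd(\mu-\pi)(x)$. Assembling the two terms yields \eqref{eq:spherical-MMD-IFT-gfe} for both the spherical and non-spherical \ours flow.

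I expect the genuinely delicate points to be bookkeeping rather than deep: (i) justifying $\K^{-1}\K=\ID$ on $\dFdmu$, which is safe here since $\dFdmu$ is manifestly in the range of $\K$, so no unbounded-inverse issue arises; and (ii) tracking which argument the gradient acts on so that the symmetry identity $\nabla_y k(y,x)=\nabla_2 k(x,y)$ lands on the paper's $\nabla_2$ convention. The conceptual heart is the single observation $\int\K^{-1}\dFdmu=\int(\mu-\pi)=0$, which renders the spherical projection inactive and forces the coincidence of all four flow equations for the MMD energy.
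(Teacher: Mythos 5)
Your proposal is correct and takes essentially the same route as the paper: compute the first variation $\dFdmu=\K(\mu-\pi)$, plug it into the four flow equations, note that $\int\K^{-1}\dFdmu=\int(\mu-\pi)=0$ for probability measures so the spherical correction vanishes (the paper carries out exactly this cancellation, deferring it to the proof of Theorem~\ref{thm:mmd-energy-decay}), and solve the resulting linear ODE elementarily. Your write-up simply supplies more of the bookkeeping (the symmetry argument for the first variation, the range-of-$\K$ remark, the $\nabla_2$ convention) that the paper leaves implicit.
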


The explicit solution to the ODE~\eqref{eq:spherical-MMD-MMD-gfe} shows an exponential convergence to the target measure $\pi$ along the (spherical) MMD gradient flow.
The (spherical) \ours gradient flow equation~\eqref{eq:spherical-MMD-IFT-gfe}
differs
from the Wasserstein gradient flow equation of \citep{arbel_maximum_2019} by a linear term.
This explains the intuition of why we can expect good convergence properties for the \ours gradient flow of the squared MMD energy.
We exploit this feature of the \ours gradient flow
to show global convergence
guarantees
for inference with the MMD energy.
This has not been possible previously when confined to the pure
Wasserstein gradient flow.
\begin{theorem}
    [Global exponential convergence of
    the \ours flow of the MMD energy]
    Suppose the energy $F$ is the squared MMD energy $F(\mu)=\frac12\mmd^2(\mu, \nu)$.
    Then, the \eqref{eq:loj-ours} holds globally
    with a constant $c\geq 2 \beta>0$.
    
    Consequently, for any initialization within the non-negative measure cone
     $\mu_0\in\Mplus$, the squared MMD energy decays exponentially along the \ours gradient flow
     of non-negative measures, \ie
    \begin{align}
            \frac12\mmd^2(\mu_t, \nu)
        \leq 
        e^{- 2 \beta t}\cdot  \frac12\mmd^2(\mu_0, \nu)
        .
        \label{eq:mmd-energy-decay}
    \end{align}

    Furthermore, if the initial datum $\mu_0$
    and the target measure $\pi$
    are probability measures $\mu_0, \pi \in\cal P$,
    then the squared MMD energy decays exponentially globally
    along the spherical \ours gradient flow,
    \ie 
    the decay estimate~\eqref{eq:mmd-energy-decay}
    holds along the spherical \ours gradient flow
    of probability measures.
    \label{thm:mmd-energy-decay}
\end{theorem}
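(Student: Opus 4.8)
The plan is to verify the Łojasiewicz-type inequality \eqref{eq:loj-ours} by a direct computation and then invoke Theorem~\ref{prop:loj-ours} to obtain the exponential decay \eqref{eq:mmd-energy-decay}; the spherical statement will follow once I show that the mean-correction term distinguishing \eqref{eq:loj-ours-sph} from \eqref{eq:loj-ours} vanishes under mass preservation.

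First I would compute the first variation. Writing $\sigma := \mu-\nu$, a perturbation calculation gives $\dFdmu = \K\sigma = \int k(\cdot,x')\,\dd(\mu-\nu)(x')$, which is the kernel mean embedding of the signed measure $\mu-\nu$ and lies in $\rkhs$ for a bounded kernel. The crucial identity is the reproducing-property computation
\[
\bigl\|\dFdmu\bigr\|_\rkhs^2 = \|\K(\mu-\nu)\|_\rkhs^2 = \int\!\!\int k(x,x')\,\dd(\mu-\nu)(x)\,\dd(\mu-\nu)(x') = \mmd^2(\mu,\nu) = 2F(\mu).
\]
Since $F\ge 0$ with $\inf_\mu F = 0$ attained at the feasible point $\mu=\nu\in\Mplus$, the reaction/MMD term of the dissipation already controls the entire energy gap.

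Next I would assemble the inequality. As the transport contribution $\alpha\,\|\nabla\dFdmu\|^2_{L^2_\mu}$ is non-negative, dropping it and using the identity above yields $\alpha\,\|\nabla\dFdmu\|^2_{L^2_\mu} + \beta\,\|\dFdmu\|^2_\rkhs \ge \beta\,\|\dFdmu\|^2_\rkhs = 2\beta F(\mu)$, so \eqref{eq:loj-ours} holds globally with $c\ge 2\beta$. Theorem~\ref{prop:loj-ours} then gives $F(\mu_t)\le e^{-2\beta t}F(\mu_0)$, which is exactly \eqref{eq:mmd-energy-decay} for the \ours flow over $\Mplus$, with no convexity-along-geodesics assumption needed.

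For the spherical case (now $\nu=\pi\in\mathcal P$) I would show the extra term in \eqref{eq:loj-ours-sph} drops out. Formally $\K^{-1}\dFdmu = \K^{-1}\K(\mu-\pi) = \mu-\pi$, so its total mass is $\int \K^{-1}\dFdmu = \int\dd\mu - \int\dd\pi$. By Proposition~\ref{prop:spherical-ours-gfe} the spherical \ours flow is mass-preserving, hence $\mu_t\in\mathcal P$ for all $t\ge 0$ whenever $\mu_0\in\mathcal P$; combined with $\pi\in\mathcal P$ this forces $\int\K^{-1}\dFdmu = 0$ along the \emph{entire} trajectory, so the mean-correction $\frac{\int \K^{-1}\dFdmu}{\int \K^{-1}1}$ is identically zero. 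Thus \eqref{eq:loj-ours-sph} collapses to \eqref{eq:loj-ours} and the identical decay \eqref{eq:mmd-energy-decay} holds, consistent with the coincidence already recorded in Theorem~\ref{thm:spherical-MMD-MMD-gfe}. The main obstacle I anticipate is the rigorous handling of the inverse integral operator $\K^{-1}$: one must justify via the spectral decomposition of the compact operator $\K$ that $\dFdmu = \K(\mu-\nu)$ genuinely lies in $\rkhs$ and that the manipulation $\K^{-1}\K(\mu-\pi)=\mu-\pi$ (together with the identity $\langle f,\K^{-1}f\rangle_{L^2}=\|f\|_\rkhs^2$ underlying Theorem~\ref{prop:loj-ours}) is valid on the relevant subspace. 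Everything else is bookkeeping once the identity $\|\dFdmu\|_\rkhs^2 = 2F(\mu)$ is in hand.
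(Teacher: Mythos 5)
Your proposal is correct and follows essentially the same route as the paper: the key identity $\|\dFdmu\|_\rkhs^2=\|\K(\mu-\nu)\|_\rkhs^2=\mmd^2(\mu,\nu)=2F(\mu)$ combined with dropping the non-negative Wasserstein dissipation term gives \eqref{eq:loj-ours} with $c=2\beta$, and the spherical case follows because $\int\K^{-1}\dFdmu=\int\dd(\mu-\pi)=0$ for probability measures, exactly as in the paper's proofs of Theorems~\ref{thm:mmd-energy-decay} and~\ref{thm:spherical-MMD-MMD-gfe}. The only cosmetic difference is that you route the decay through Theorem~\ref{prop:loj-ours} while the paper recomputes the dissipation inline and invokes Gr\"onwall directly; your explicit flagging of the $\K^{-1}$ domain issues is a reasonable extra caution but not a departure in substance.
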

We emphasize that no \BE type or kernel conditions are required -- the \Loj inequality holds globally when using the \ours flow.
In contrast, the Wasserstein-Fisher-Rao gradient flow
\begin{align}
    \dot \mu =  
    \alpha\cdot \operatorname{div}\left(\mu \int \nabla_2 k(x,\cdot )\dd \left(\mu-\pi\right)(x)\right)
    - \beta \cdot
    \int k(x, \cdot )
    \dd
    \left(
        \mu -{ \pi}
    \right)(x)
    \tag{MMD-WFR-GF}
    \label{eq:wfr-MMD-gfe}
\end{align}
does not enjoy such global convergence guarantees.
\paragraph*{KL energy functional}
For variational inference and MCMC, a common choice of the energy is the KL divergence energy, \ie $F(\mu)=\mathrm{D}_\mathrm{KL}(\mu|\pi)$.
This has already been studied by a large body of literature, including the case of Wasserstein-Fisher-Rao~\cite{LiMiSa23FPGG,luAcceleratingLangevinSampling2019}.
Not surprisingly, \eqref{eq:LSI} is still sufficient for the exponential convergence of the WFR type of gradient flows since the dissipation of the Wasserstein part alone is sufficient for driving the system to equilibrium.
For the \ours gradient flows under the KL divergence energy functional, the convergence can still be established.
This showcases the strength of the \ours geometry -- it enjoys the best of both worlds.
The \ours gradient flow equation of the KL divergence energy reads
    $\displaystyle \dot \mu = \alpha\cdot \operatorname{div}(\mu \nabla \log \frac{\dd\mu}{\dd\pi}) - 
        \beta\cdot \K^{-1}  \log \frac{\dd\mu}{\dd\pi}$.
Unlike the MMD-energy flow case, the spherical \ours gradient flow of the KL over probability measures $\cal P$ no longer coincides with that of the (non-spherical) \ours and is
given by
\begin{align}
    \dot \mu = \alpha\cdot \operatorname{div}(\mu \nabla \log \frac{\dd\mu}{\dd\pi}) - 
    \beta\cdot \K^{-1}  
    \left(
        \log \frac{\dd\mu}{\dd\pi} - \frac{\int \K^{-1}\log \frac{\dd\mu}{\dd\pi}}{\int \K^{-1}1}
        \right)
    .
    \label{eq:kl-gfe-ift-sph}
\end{align}
\begin{proposition}
    [Exponential convergence of the S\ours gradient flow of the KL divergence energy]
    Suppose the \eqref{eq:LSI} holds
    with $c_{\textrm{LSI}}=2\lambda$ or 
    the target measure $\pi$ is $\lambda$-log concave for some $\lambda>0$.
    Then, the KL divergence energy decays exponentially globally along
    the spherical \ours gradient flow~\eqref{eq:kl-gfe-ift-sph}
    ,\ie
$\displaystyle\mathrm{D}_\mathrm{KL}(\mu_t|\pi) \leq e^{- 2\alpha \lambda t} \mathrm{D}_\mathrm{KL}(\mu_0|\pi)$.
\label{prop:kl-energy-decay}
\end{proposition}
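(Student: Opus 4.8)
The plan is to obtain Proposition~\ref{prop:kl-energy-decay} directly from the master decay result Theorem~\ref{prop:loj-ours}: it suffices to verify that the spherical \Loj{} inequality \eqref{eq:loj-ours-sph} holds globally with constant $c=2\alpha\lambda$ for the energy $F(\mu)=\mathrm{D}_\mathrm{KL}(\mu|\pi)$. First I would record the first variation $\dFdmu=\log\frac{\dd\mu}{\dd\pi}+1$ and check that the additive constant is inert on both sides of \eqref{eq:loj-ours-sph}. In the Wasserstein term it is killed by $\nabla$, giving $\nabla\dFdmu=\nabla\log\frac{\dd\mu}{\dd\pi}$; in the spherical correction, writing $g=\log\frac{\dd\mu}{\dd\pi}$, one computes $(g+1)-\frac{\int\K^{-1}(g+1)}{\int\K^{-1}1}=g-\frac{\int\K^{-1}g}{\int\K^{-1}1}$, since the constant contributes $1-\frac{\int\K^{-1}1}{\int\K^{-1}1}=0$. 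Hence the effective driving field in both terms is exactly $\log\frac{\dd\mu}{\dd\pi}$, consistent with the flow \eqref{eq:kl-gfe-ift-sph}.

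Next I would lower-bound the left-hand side of \eqref{eq:loj-ours-sph} by discarding the interaction term, which is a squared RKHS norm and hence nonnegative, and then invoke the logarithmic Sobolev inequality \eqref{eq:LSI} with $c_\mathrm{LSI}=2\lambda$:
\[
\alpha\,\big\|\nabla\log\tfrac{\dd\mu}{\dd\pi}\big\|^2_{L^2_\mu}+\beta\,\Big\|g-\tfrac{\int\K^{-1}g}{\int\K^{-1}1}\Big\|^2_\rkhs\;\geq\;\alpha\,\big\|\nabla\log\tfrac{\dd\mu}{\dd\pi}\big\|^2_{L^2_\mu}\;\geq\;2\alpha\lambda\,\mathrm{D}_\mathrm{KL}(\mu|\pi).
\]
Since $\inf_\mu F=0=F(\pi)$ is attained at the probability measure $\pi\in\cal P$, the right-hand side equals $2\alpha\lambda\,(F(\mu)-\inf_\mu F)$, so this is exactly \eqref{eq:loj-ours-sph} with $c=2\alpha\lambda$. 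Theorem~\ref{prop:loj-ours} then yields $\mathrm{D}_\mathrm{KL}(\mu_t|\pi)\leq e^{-2\alpha\lambda t}\,\mathrm{D}_\mathrm{KL}(\mu_0|\pi)$, as claimed. Under the alternative hypothesis that $\pi$ is $\lambda$-log-concave, I would first apply the \BE{} theorem to produce \eqref{eq:LSI} with $c_\mathrm{LSI}=2\lambda$, reducing this case to the one just treated.

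The inequality itself is therefore immediate; the points requiring care are matters of regularity and the structural role of $\beta$. I must ensure the trajectory stays in $\cal P$ and remains absolutely continuous with respect to $\pi$, so that $\log\frac{\dd\mu}{\dd\pi}$, the first variation, and \eqref{eq:LSI} are well defined along the flow: mass preservation $\int\dot\mu=0$ is guaranteed by Proposition~\ref{prop:spherical-ours-gfe}, while absolute continuity is inherited from well-posedness of \eqref{eq:kl-gfe-ift-sph} under the standing assumptions. The one genuinely structural observation, which I expect to be the main conceptual (rather than computational) obstacle, is that discarding the $\beta$-term is lossless for a decay \emph{lower} bound: the MMD/interaction dissipation is always nonnegative and can only accelerate the decay, which is precisely why $\beta$ does not appear in the rate and why the Wasserstein dissipation alone, through the \BE/LSI mechanism, suffices. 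This is the exact sense in which the spherical \ours flow \emph{enjoys the best of both worlds}.
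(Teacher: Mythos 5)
Your proposal is correct and follows essentially the same route as the paper: both arguments reduce the claim to the dissipation identity, discard the nonnegative kernel/interaction term, apply the logarithmic Sobolev inequality (obtained from log-concavity via Bakry--\'Emery) to the remaining Wasserstein dissipation, and conclude by Gr\"onwall. The only cosmetic difference is that you factor the computation through Theorem~\ref{prop:loj-ours} and explicitly check that the additive constant in the first variation of the KL energy is annihilated by both the gradient and the spherical correction, whereas the paper recomputes the time derivative directly; the substance is identical.
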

The intuition behind the above result is that the S\ours gradient flow
converges whenever the pure Wasserstein gradient flow, \ie its convergence is at least as fast as the Wasserstein gradient flow.
However, we emphasize that the decay estimate of the KL divergence energy only holds along the spherical \ours flow over probability measures $\cal P$, but not the full \ours flow over non-negative measures $\Mplus$.

\subsection{Minimizing movement, JKO-splitting, and a practical particle-based algorithm}
In applications to machine learning and computation,
continuous-time flow can be discretized via
the JKO
scheme~\citep{jordan_variational_1998}, which is based on the minimizing movement scheme (MMS)~\citep{de1993new}.
For the reaction-diffusion type
gradient flow equation in
the Wasserstein-Fisher-Rao setting,
the
\emph{JKO-splitting} a.k.a. \emph{time-splitting} scheme
has been studied by
\citet{gallouet2017jko,mielkeTimesplittingMethodsGradient2023}.
This amounts to
splitting the 
diffusion (Wasserstein)
and
reaction (MMD) step in \eqref{eq:ikw-gfe-unreg},
\ie at time step $\ell \geq 1$
    \begin{align}
        \label{eq:jko-split}
        \begin{aligned}
            \mu^{\ell+\frac12}
            &\gets\argmin_{\mu\in\cal P} F(\mu ) + \frac1{2\tau}W_2^2(\mu, \mu^\ell)
            ,
            && \textrm{(Wasserstein step)}
            \\
            \mu^{\ell+1}
            &\gets\argmin_{\mu\in\cal P} F(\mu ) + \frac1{2\eta}{\mmd}^2(\mu, \mu^{\ell+\frac12})
            .
            && \textrm{(MMD step)}
        \end{aligned}
    \end{align}
A similar JKO-splitting scheme can also be constructed via the WFR gradient flow,
which amounts to replacing the MMD step in
\eqref{eq:jko-split}
with a proximal step in the KL (as an approximation to the Hellinger), \ie 
$\mu^{\ell+1}
\gets\argmin_{\mu\in\cal P} F(\mu ) + \frac1{\eta}{\mathrm{D}_{\mathrm{KL}}}(\mu| \mu^{\ell+\frac12})$,
which is well-studied in the optimization literature
as the entropic mirror descent~\citep{nemirovskijProblemComplexityMethod1983}. %
Our MMD step can also be viewed as a mirror descent step with the mirror map $\frac12\|\K \cdot \|^2_{\rkhs}$.
However, for the task of MMD inference of \citep{arbel_maximum_2019}, WFR flow does not possess convergence guarantees such as our Theorem~\ref{thm:spherical-MMD-MMD-gfe}. 
The MMD step can also be easily implemented as in our simulation.

We summarize the resulting overall \ours particle gradient descent from the JKO splitting scheme in Algorithm~\ref{alg:jko-split} in the appendix.
We now look at those two steps 
respectively.
For concreteness,
we consider a flexible particle approximation to the probability measures, with possibly non-uniform weights allocated to the particles, \ie
$\mu = \sum_{i=1}^{n}\alpha_i\delta_{x_i}, \alpha \in \Delta^n, \ x_i\in\XX$.
    \paragraph{Wasserstein step: particle position update.}
    \eqref{eq:jko-split} is a standard JKO scheme;
    see, e.g., \cite{santambrogio_optimal_2015}.
    The optimality condition of the Wasserstein proximal step can be
    implemented using
    a particle gradient descent algorithm
    \begin{align}
        x^{\ell+1}_i = x^{\ell}_i - {\tau} \cdot \nabla \frac{\delta F}{\delta \mu}[\mu^\ell] (x^{\ell}_i), \ i=1,...,n,
        \label{eq:wasserstein-step-particle-update}
    \end{align}
    which is essentially the algorithm proposed by \citet{arbel_maximum_2019}
    when 
    $F(\mu)=\frac12\mmd^2(\mu,\pi)$.
    \paragraph{MMD step: particle weight update.}
    The MMD step in \eqref{eq:jko-split} is a discretization step of the spherical MMD gradient flow, as shown in \eqref{eq:mmd-mms} and Proposition~\ref{prop:spherical-ours-gfe}.
    We propose to use the updated particle location $x_i^{\ell+1}$ from the Wasserstein step~\eqref{eq:wasserstein-step-particle-update} and update the weights $\beta_i$ by solving
    \begin{align*}
        \inf_{\beta\in \Delta ^n} 
        F( \sum_{i=1}^{n}\beta_i\delta_{x_i^{\ell+1}})
            +
            \frac1{2\eta}
        \operatorname{MMD}^2(  \sum_{i=1}^{n}\beta_i\delta_{x^{\ell+1}_i},  \sum_{i=1}^{n}\alpha^\ell_i\delta_{x^{\ell+1}_i})
        ,
    \end{align*}
    \ie the MMD step only updates the weights.
    Alternatively, as in the classical mirror descent optimization,
    one can use a linear approximation
    $
    F(\mu) \approx F(\mu^\ell) + \langle \dFdmuk, \mu - \mu^\ell\rangle
    _{L^2}
    $.
We also provide a specialized discussion on
the MMD-energy minimization task of \citep{arbel_maximum_2019}.
Let the energy objective be the squared MMD
$%
    F(\mu ) := \frac{1}{2}\mmd(\mu, \pi)^2$.
In this setting, we are given the particles sampled from the target measure $y^i\sim \pi$.
For the MMD step in \eqref{eq:jko-split}, the computation is drastically simplified
to an MMD barycenter problem,
which was also studied in \citep{cohenEstimatingBarycentersMeasures2021}.
   This amounts to solving
   a convex quadratic program with a simplex constraint; see the appendix for the detailed expression.
\section{Numerical Example}
\label{sec:numerical-example}
The overall goal of the numerical experiments
is to approximate the target measure $\pi$ by minimizing the squared MMD energy, \ie
$\displaystyle\min_{\mu\in A\subset \cal P}
           \mmd^2 (\mu,  \pi)$.
In all the experiments, we have access to the target measure $\pi$ in the form of samples $y_i\sim \pi$.
This setting was studied in \citep{arbel_maximum_2019} as well as in many deep generative model applications.
In the following experiments, we compare the performance of our proposed
algorithm
of \ours gradient flow,
which implements the JKO-splitting scheme in
\eqref{eq:jko-split} and is detailed in
Algorithm \ref{alg:jko-split},
to that of
\emph{(1)}
\citet{arbel_maximum_2019}'s
the ``MMD flow'' (see our discussion in \footref{footnote:mmd-flow-name}),
we used their algorithm both with and without a heuristic noise injection suggested by those authors;
\emph{(2)}
the Wasserstein-Fisher-Rao flow of the MMD \eqref{eq:wfr-MMD-gfe}.
The WFR flow was also used by \citet{yanLearningGaussianMixtures2023,lu2023birth} but for minimizing the KL divergence function.
As discussed in \S\ref{sec:mmd-energy-convergence}, the MMD flow
of \citep{arbel_maximum_2019} does not possess global convergence guarantees while \ours does.
Furthermore, in the Gaussian mixture target experiment, the target measure $\pi$ is not log-concave.
We emphasize that our convergence guarantee still holds for the \ours flow while there is no decay guarantee for the WFR flow.
We provide the code for the implementation at \url{https://github.com/egorgladin/ift_flow}.

\begin{figure}[ht]
    \centering
    \begin{subfigure}[b]{0.49\textwidth}
        \centering
        \includegraphics[width=\textwidth]{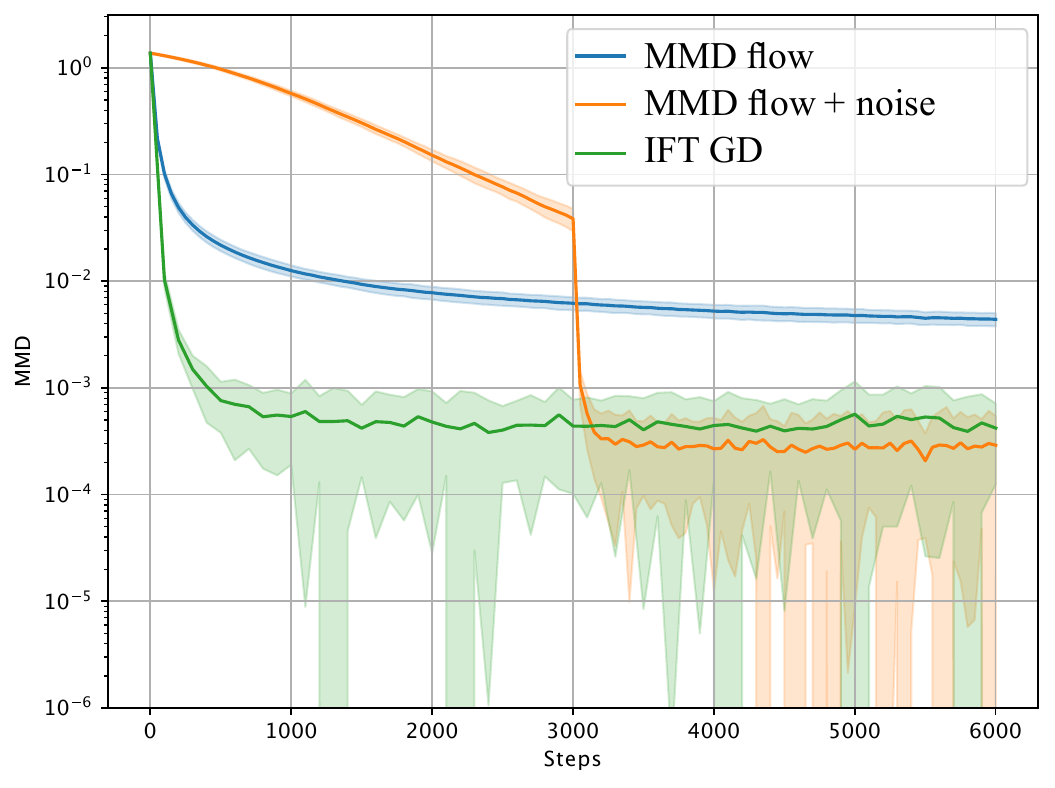}
        \caption{Gaussian target experiment}
        \label{fig:image1}
    \end{subfigure}
    \hfill
    \begin{subfigure}[b]{0.49\textwidth}
        \centering
        \includegraphics[width=\textwidth]{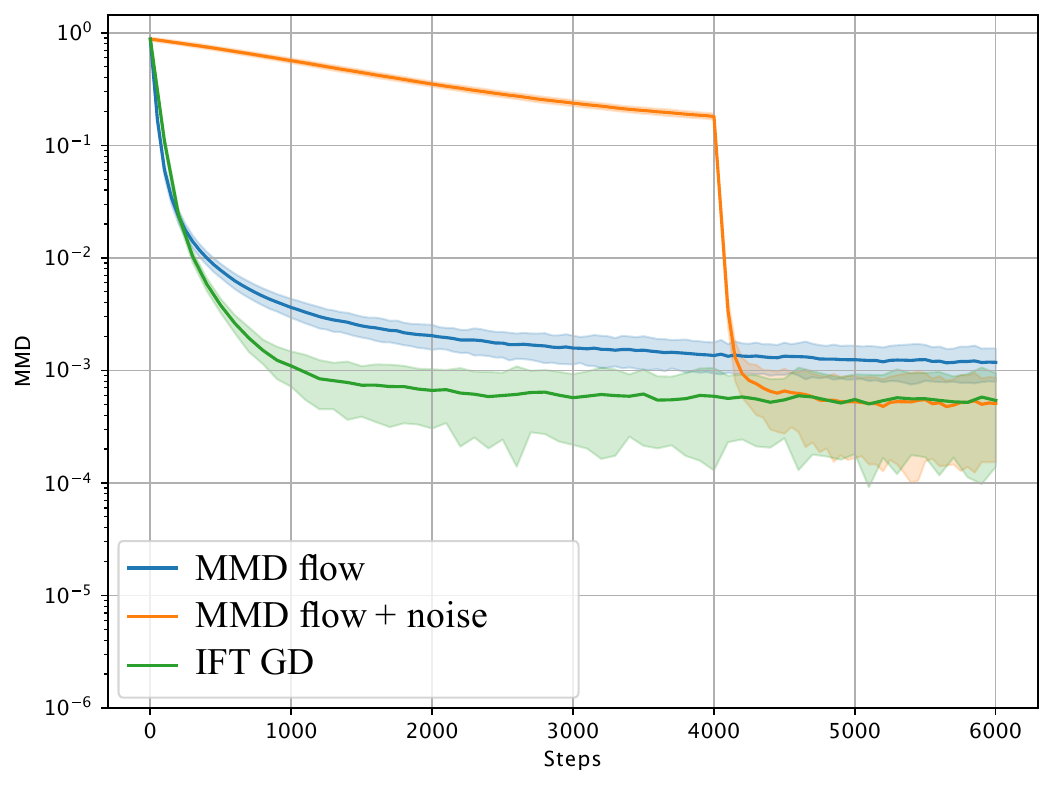}
        \caption{Gaussian mixture target experiment}
        \label{fig:image2}
    \end{subfigure}
    \caption{Mean loss and standard deviation computed over 50 runs}
    \label{fig:side_by_side}
\end{figure}

\begin{figure}
    \centering
    \begin{subfigure}{.32\textwidth}
        \centering
        \includegraphics[width=.95\linewidth]{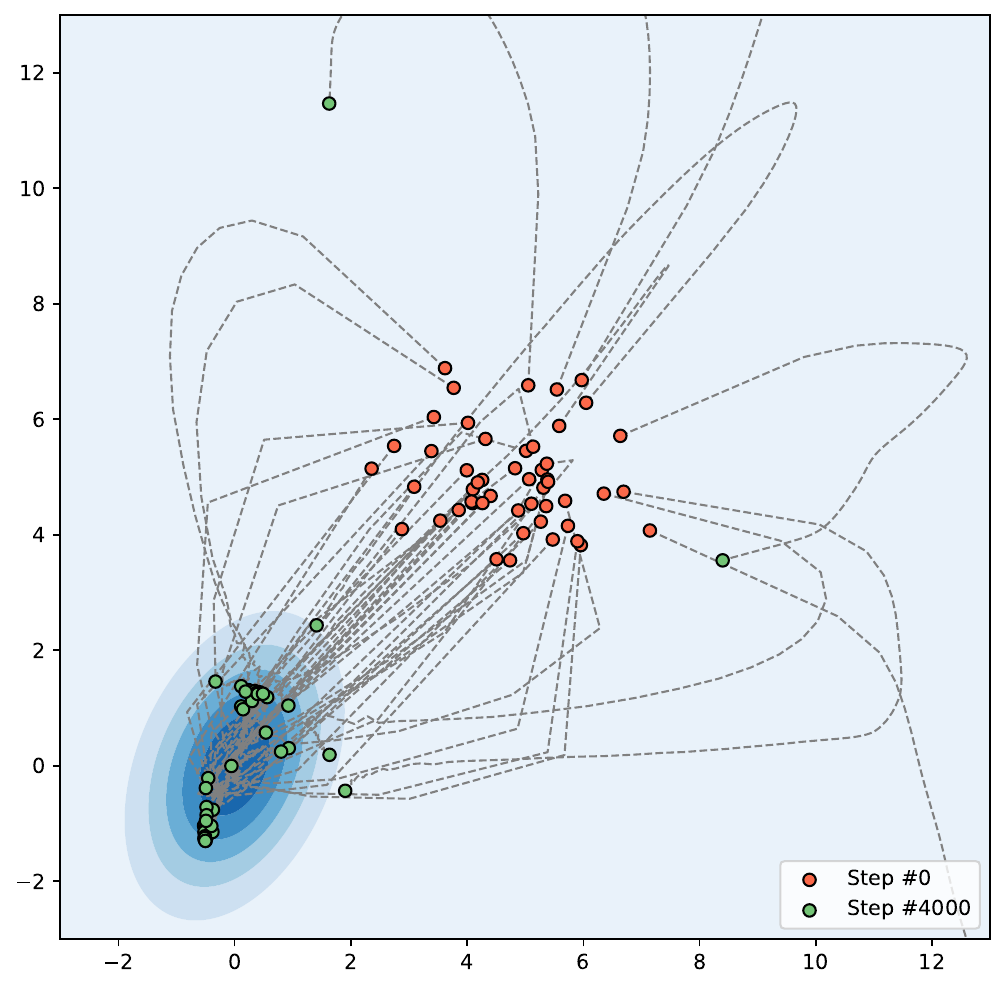}  
        \caption{Vanilla MMD flow}
    \end{subfigure}
    \begin{subfigure}{.32\textwidth}
        \centering
        \includegraphics[width=.95\linewidth]{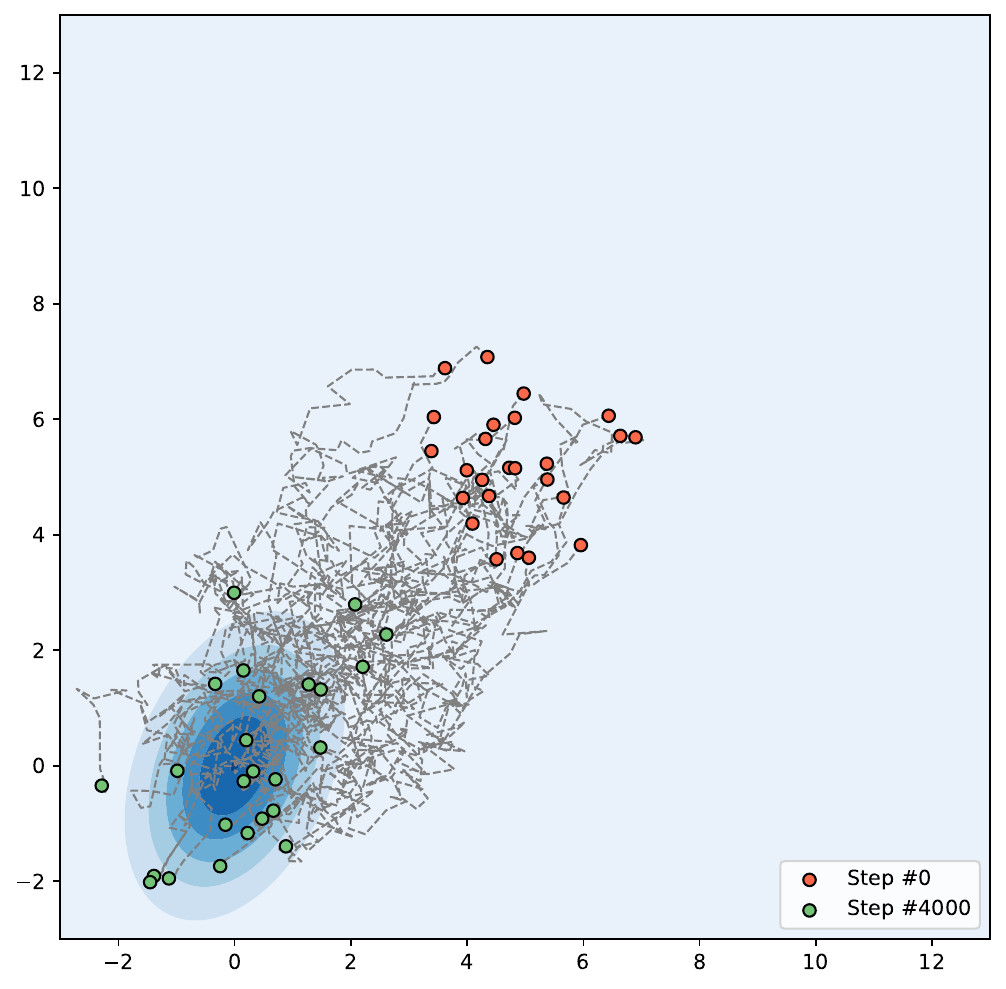}  
        \caption{MMD flow + noise}
    \end{subfigure}
    \begin{subfigure}{.32\textwidth}
        \centering
        \includegraphics[width=.95\linewidth]{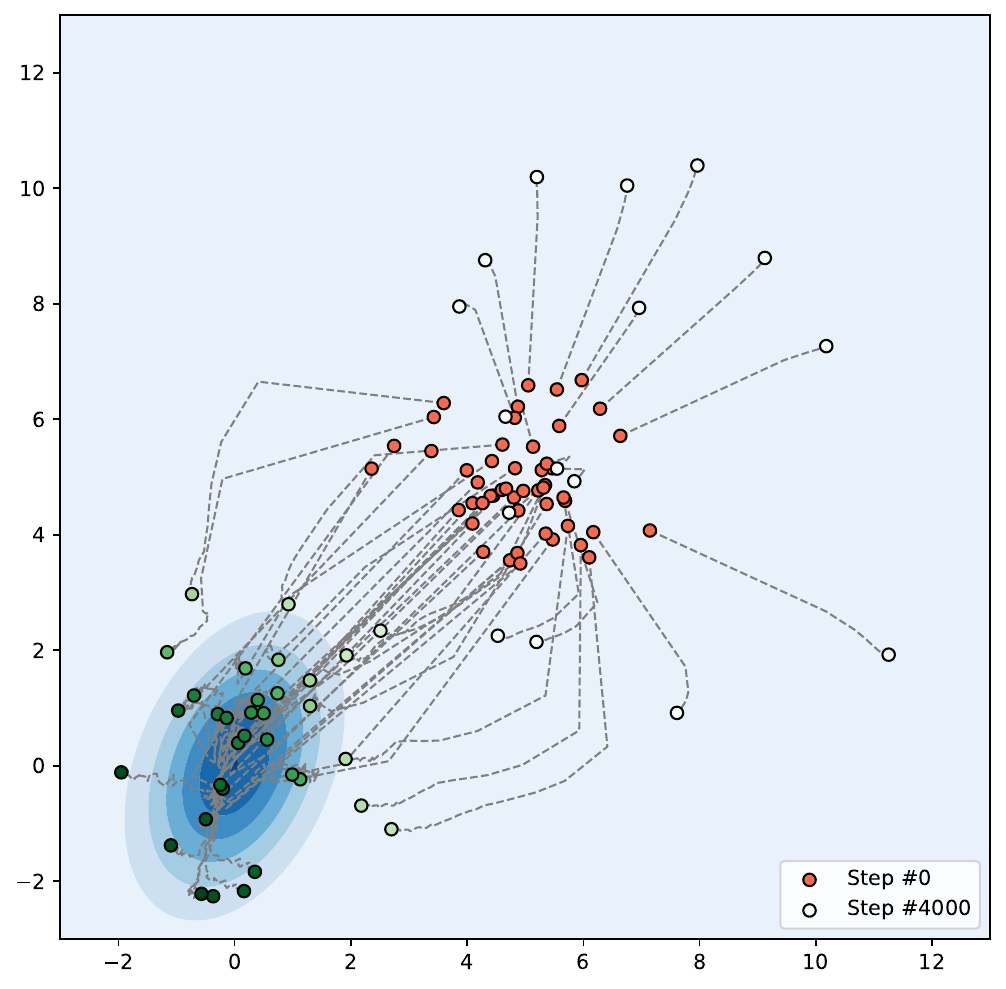}  
        \caption{\ours (Algorithm \ref{alg:jko-split})}
    \end{subfigure}
    \caption{Trajectory of a randomly selected subsample produced by different algorithms in the Gaussian target experiment. Color intensity indicates points' weights.
    The hollow dots indicate the particles that have already vanished.}
    \label{fig:traj1}
\end{figure}
\begin{figure}
    \centering
    \begin{subfigure}{.32\textwidth}
        \centering
        \includegraphics[width=.95\linewidth]{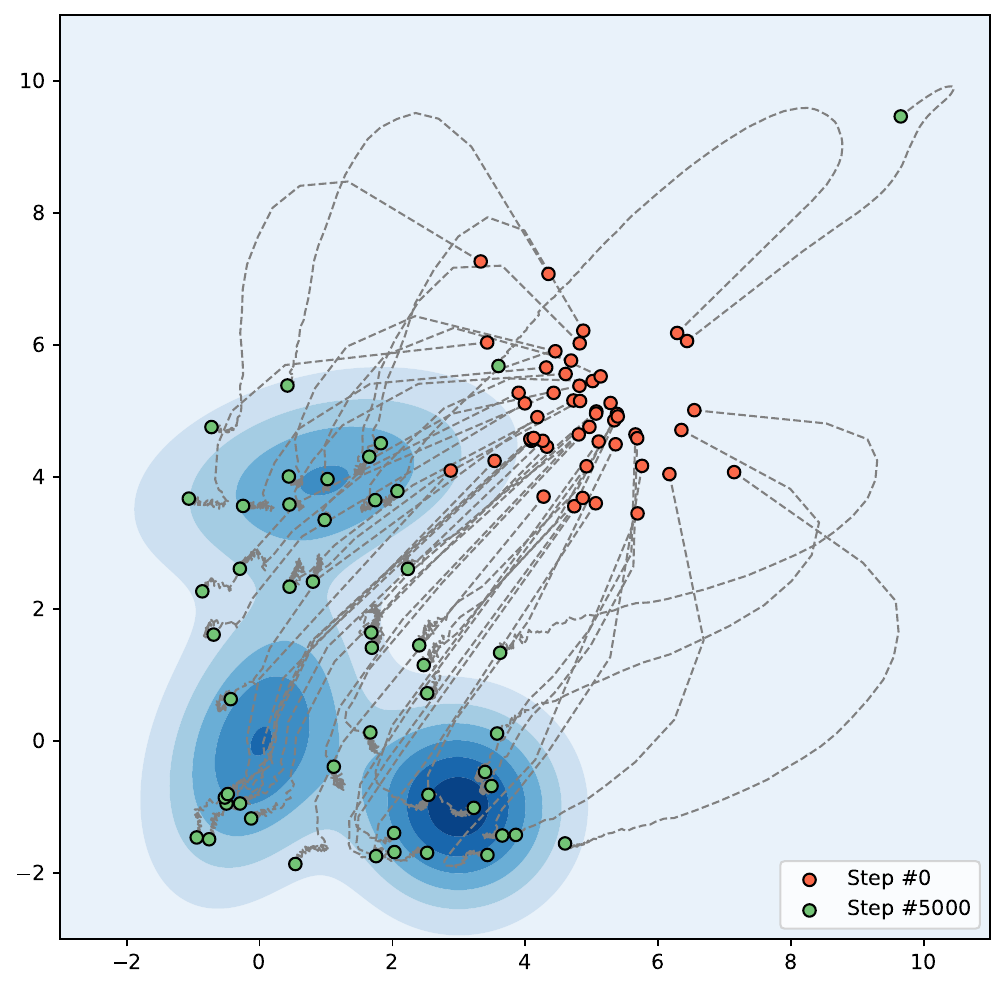}  
        \caption{Vanilla MMD flow}
    \end{subfigure}
    \begin{subfigure}{.32\textwidth}
        \centering
        \includegraphics[width=.95\linewidth]{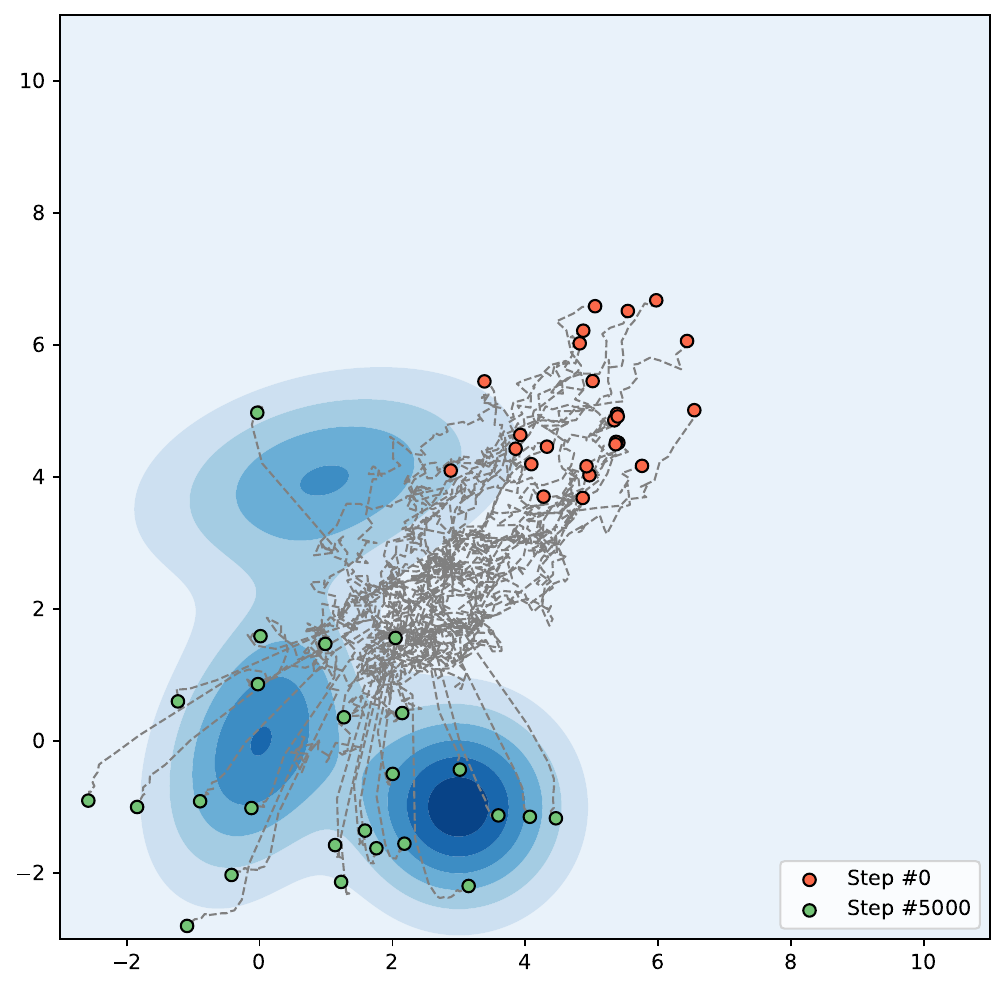}  
        \caption{MMD flow + noise}
    \end{subfigure}
    \begin{subfigure}{.32\textwidth}
        \centering
        \includegraphics[width=.95\linewidth]{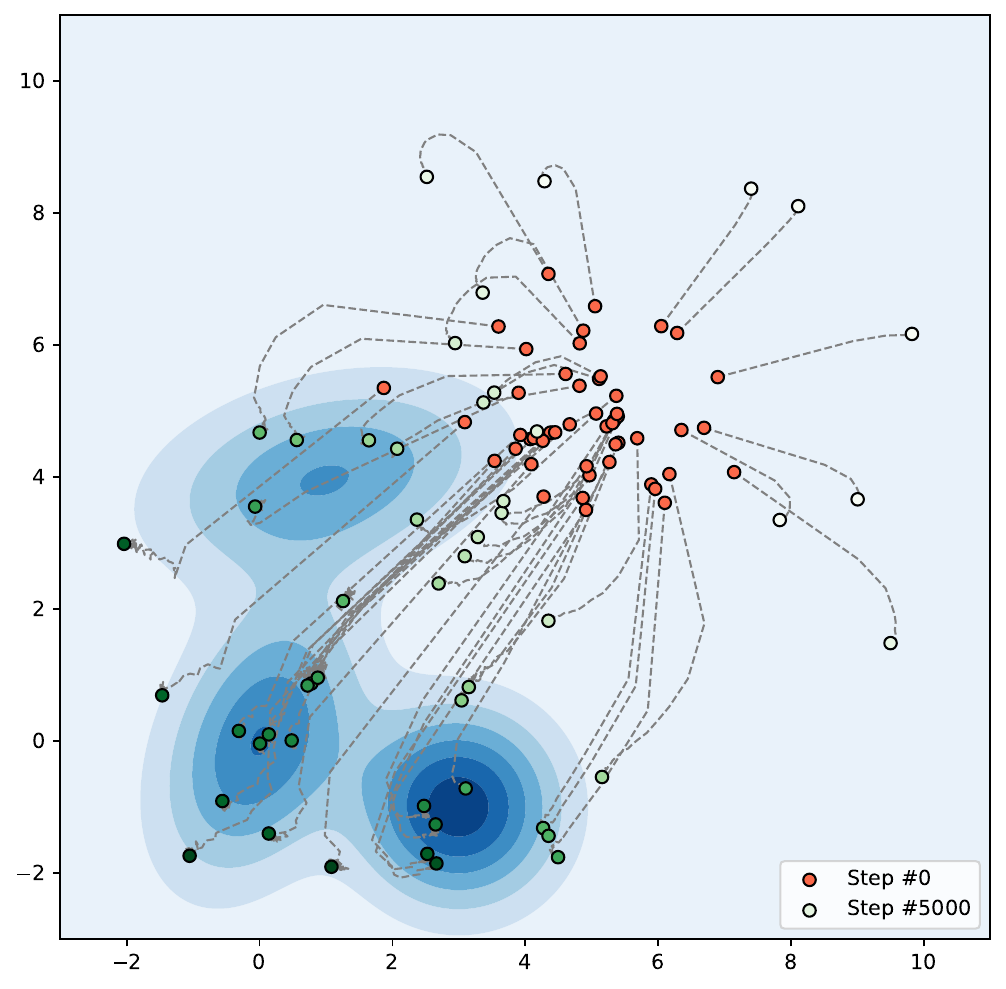}  
        \caption{\ours (Algorithm \ref{alg:jko-split})}
    \end{subfigure}
    \caption{Trajectory of a randomly selected subsample produced by different algorithms in the Gaussian mixture experiment. Color intensity indicates points' weights.
    The hollow dots indicate the particles that have already vanished.}
    \label{fig:traj2}
\end{figure}
\textbf{Gaussian target in 2D experiment}
Figures \ref{fig:side_by_side}(a) and \ref{fig:traj1} showcase the performance of the algorithms in a setting where $\mu^0$ and $\pi$ are both Gaussians in 2D. Specifically, $\mu^0 \sim \mathcal{N}(5 \cdot \mathbf{1}, I)$ and $\pi \sim \mathcal{N}\left(\mathbf{0}, \SmallMatrix{1&1/2\\1/2&2}\right)$. The number of samples drawn from $\mu^0$ and $\pi$ was set to $n=100$. 
A Gaussian kernel with bandwidth $\sigma=10$ was used.
For all three algorithms, we chose the largest stepsize that didn't cause unstable behavior, $\tau=50$. The parameter $\eta$ in \eqref{mmd_st} was set to 0.1.
As can be observed from the trajectories produced by MMD flow (Figure \ref{fig:traj1}(a)), most points collapse into small clusters near the target mode. Some points drift far away from the target distribution and get stuck; the resulting samples represent the target distribution poorly,
which is a sign of suboptimal solution. 
MMD flow with
the heuristic noise injection
produces much better results. 
We suspect the noise helps to escape local minima;
however, the injection needs to be heuristically tuned.
However, it takes a large number iterations for points to get close to locations with high density of the target distribution.
Similarly to the previous research on noisy MMD flow, we use a relatively large noise level (10) in the beginning and ``turn off'' the noise after a sufficient number of iterations (3000 in our case). A drawback of this approach is that the right time for noise deactivation depends on the particular problem instance, which makes the algorithm behavior less predictable.
\begin{wrapfigure}[]{t}{0.54\textwidth}
    \centering
    \includegraphics[width=0.5\textwidth]{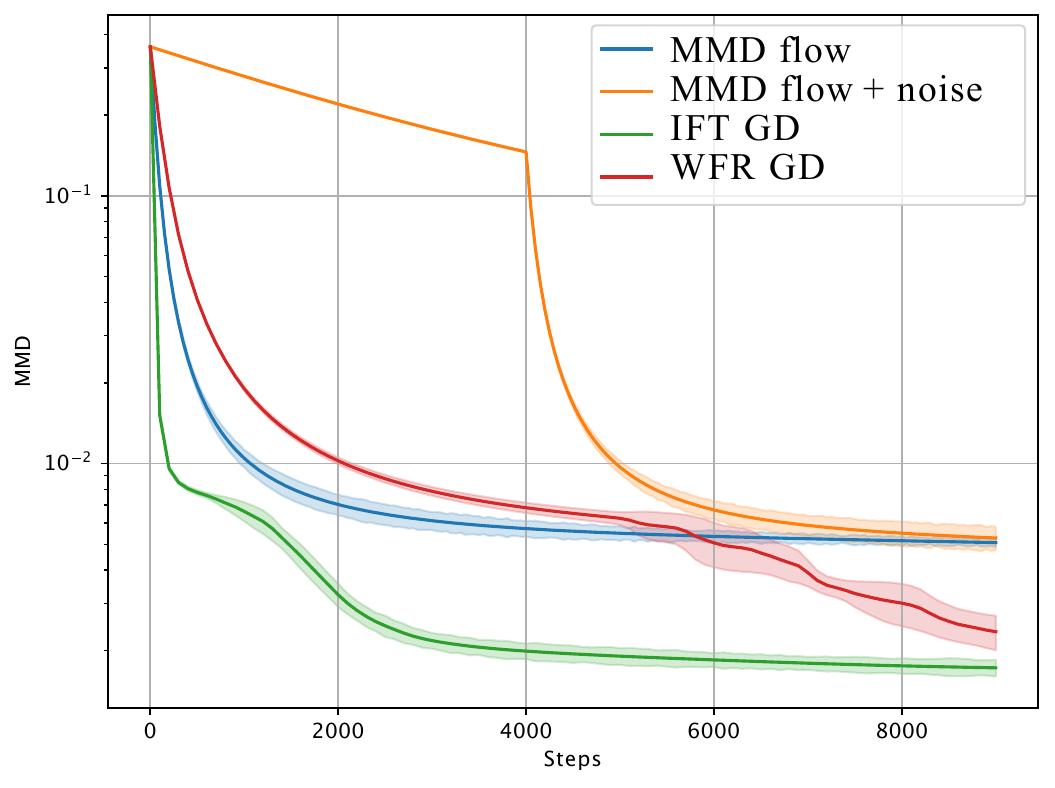}
    \caption{Comparison with the WFR flow of the MMD in 100 dimensions}
    \label{fig:image2-wfr}
\end{wrapfigure}
Algorithm \ref{alg:jko-split} achieves a similar accuracy to that of the noise-injected MMD flow, but much faster -- already after 1000 steps -- without 
any heuristic noise injection.
For the few particles that did not make it close to the target Gaussian's mean,
their mass is teleported to those particles that are close to the target.
Hence, the resulting performance of the \ours algorithm does not deteriorate.
The hollow dots in the trajectory plot indicate the particles whose mass has been teleported and hence their weights are zero.
This is a major advantage of unbalanced transport for dealing with local minima.
For a faster implementation, 
in the implementation of the MMD step in \eqref{eq:jko-split}, we only perform a single step of projected gradient descent instead of computing a solution to the auxiliary optimization problem \eqref{mmd_st}. To be fair in comparison, we count each iteration as two steps. Thus, 6000 steps of the algorithm in Figure \ref{fig:side_by_side}(a) correspond to only 3000 iterations, \ie the results for \ours algorithm have already been handicapped by a factor of $2$.
We would also like to note that Algorithm \ref{alg:jko-split} was executed with constant hyperparameters without further tuning over the iterations, in contrast to the noisy MMD flow.
In practical implementations, it is possible to further improve 
the performance
by sampling new locations (particle rejuvenation) in the MMD step~\eqref{eq:jko-split} as done similarly in \cite{daiProvableBayesianInference2016}.
Since this paper is a theoretical one and not about 
competitive benchmarking, we leave this for future work.

\textbf{Gaussian mixture in 2D experiment}
The second experiment has a similar setup. However, this time the target is a mixture of equally weighted Gaussian distributions,
\begin{equation*}
    \mathcal{N}\left(\mathbf{0}, \SmallMatrix{1&1/2\\1/2&2}\right),\quad \mathcal{N}\left(\SmallMatrix{3\\-1}, I\right),\quad \mathcal{N}\left(\SmallMatrix{1\\4}, \SmallMatrix{3&1/2\\1/2&1}\right).
\end{equation*}
Figures \ref{fig:side_by_side}(b) and \ref{fig:traj2} showcase loss curves and trajectories produced by the considered algorithms.

\textbf{WFR flow for Gaussian mixture target in 100D}
We conducted an experiment in dimension $d=100$, comparing the \ours flow with the WFR flow of the MMD energy. The initial distribution is $\mathcal{N}(0, I)$, and the target $\pi$ is a mixture of 3 distributions $\mathcal{N}(m_i, \Sigma_i),\, i=1,2,3$, where $m_i$ and $\Sigma_i$ are randomly generated such that $\|m_i\|_2=20$ and the smallest eigenvalue of $\Sigma_i$ is greater than $0.5$.
For fairness,
each iteration of IFT particle GD (as well as its version with KL step) counts as two steps, i.e., these methods only performed 4500 iterations. In the noisy MMD flow, the noise is disabled after 4000 steps. All methods are used with equal stepsize.

\section{Discussion}
In summary, 
the (spherical) \ours gradient flows are a suitable choice for energy minimization of both the MMD and KL divergence energies with sound global exponential convergence guarantees.
There is also an orthogonal line of works studying the Stein gradient flow and descent~\cite{liu_stein_2019,duncan2019geometry}, which also has the mechanics interpretation of repulsive forces.
It can be related to our work in that the \ours gradient flow has a (de-)kernelized reaction part, while the Stein flow has a kernelized diffusion part.
Furthermore, there is a work \cite{manupriyaMMDRegularizedUnbalancedOptimal2024} that proposes a static MMD-regularized Wasserstein distance,
which should not be confused with our \ours gradient flow geometry.
Another future direction is sampling and inference when we do not have access to the samples from the target distribution $\pi$, but can only evaluate its score function $\nabla \log \pi$.

\appendix
\section{Appendix: proofs and additional technical details}
\label{sec:proofs}

\begin{proof}
    [Proof of Proposition~\ref{prop:spherical-ours-gfe}]
    We first consider the MMS step~\eqref{eq:mmd-mms}.
    The Lagrangian of the MMS step is, for $\lambda\in\mathbb R$,
    \begin{align*}
        \mathcal L(\mu, \lambda) =
        F(\mu ) + \frac1{2\eta}{\mmd}^2(\mu, \mu^{\ell}) +\lambda (\int \mu - 1)
        .
    \end{align*}
    The Euler-Lagrange equation gives
    \begin{align}
        \dFdmuk  + \frac1{\eta}\K (\mu - \mu^{\ell})
        +\lambda&=0,
        \\
        \int \mu  &= 1
        .
        \label{eq:spherical-mmd-mms-EL}
    \end{align}
Rewriting the first equation,
\begin{align*}
    \mu = \mu^{\ell} - \eta \K^{-1}\dFdmuk - \eta\lambda\K^{-1}
    .
\end{align*}
Integrating both sides, we obtain
\begin{align*}
    1 = 1 - \eta \int \K^{-1}\dFdmuk - \eta\lambda\int \K^{-1}1
    \implies
    \lambda= -\frac{\int \K^{-1}\dFdmuk}{\int \K^{-1}1}
    .
\end{align*}
Let the time step $\eta\to 0$ in the first equation in the Euler-Lagrange equation~\eqref{eq:spherical-mmd-mms-EL},
we obtain
\begin{align}
    \dot \mu = - \K^{-1}(\dFdmu + \lambda)
    = - \K^{-1}\dFdmu + \frac{\int \K^{-1}\dFdmu}{\int \K^{-1}1}\cdot \K^{-1}1
    ,
    \label{eq:pf-spherical-mmd-gfe-1}
\end{align}
which is the desired spherical MMD gradient flow equation.
    
    Spherical \ours gradient flow equation is obtained by an inf-convolution~\citep{gallouet2017jko,liero_optimal_2018,chizat_unbalanced_2019} of the above spherical MMD and Wasserstein part.
    The verification of the mass-preserving property is by a straightforward integration of \eqref{eq:pf-spherical-mmd-gfe-1}
    \begin{align*}
        0 = \int \dot \mu = - \int \K^{-1}\dFdmu + \frac{\int \K^{-1}\dFdmu}{\int \K^{-1}1}\cdot \int \K^{-1}1
        =0.
    \end{align*}
    Hence, the theorem is proved.
\end{proof}

\begin{proof}
    [Proof of Theorem~\ref{prop:loj-ours}]
    The proof amounts to identifying the correct left-hand side of the \Loj type inequality.
    We take the time derivative of the energy
    \begin{multline*}
        \frac{\dd }{\dd t}
            F(\mu)
        =
        \langle
            \dFdmu
             ,
             {\alpha}\cdot \operatorname{div}(\mu \nabla\dFdmu) -   {\beta} \cdot \K^{-1} \dFdmu
            \rangle_{L^2}
            \\
            =
-\alpha\cdot \|\nabla \dFdmu\|^2_{L^2_{\mu}}
- \beta\cdot \| \dFdmu\|^2_{\rkhs}
,
    \end{multline*}
which is the desired left-hand side of the \Loj type inequality.

For the spherical \ours gradient flow, we have
\begin{multline*}
    \frac{\dd }{\dd t}
        F(\mu)
    =
    \langle
        \dFdmu
         ,
         {\alpha}\cdot \operatorname{div}(\mu \nabla\dFdmu) 
         -
            \beta\cdot 
        \K^{-1}\left(\dFdmu - \frac{\int \K^{-1}\dFdmu}{\int \K^{-1}1}\right)
        \rangle_{L^2}
        \\
        =
        \langle
            \dFdmu
             ,
             {\alpha}\cdot \operatorname{div}(\mu \nabla\dFdmu) 
             \rangle_{L^2}
             +
             \langle
              \dFdmu - \frac{\int \K^{-1}\dFdmu}{\int \K^{-1}1}
             ,
             -
                \beta\cdot 
            \K^{-1}\left(\dFdmu - \frac{\int \K^{-1}\dFdmu}{\int \K^{-1}1}\right)
            \rangle_{L^2}
            \\
=
-\alpha\cdot \Bigg\|\nabla \dFdmu\Bigg\|^2_{L^2_{\mu}}
- \beta\cdot \Bigg\| \dFdmu- \frac{\int \K^{-1}\dFdmu}{\int \K^{-1}1}\Bigg\|^2_{\rkhs}
,
\end{multline*}
where the second equality follows from the fact that the spherical \ours gradient flow is mass-preserving.
Hence, the left-hand side of the \Loj type inequality is obtained.

\end{proof}

\begin{proof}
    [Proof of Corollary~\ref{cor:kernel-approx-ours}]
    The formal proof is by using
    well-known results for kernel smoothing in non-parametric statistics~\citep{tsybakov_introduction_2009}.
    Note that the
    gradient flow equation
    can be rewritten as
\begin{align*}
    \dot u 
    = - \alpha\cdot \DIV (\mu\nabla \dFdmu) + \beta\cdot  \K ^{-1}\dFdmu
    = - \alpha\cdot \DIV (\mu\nabla \dFdmu) + \beta\cdot \mu \K_\mu ^{-1}\dFdmu
    .
\end{align*}
Recall the definition of the integral operator 
\begin{align}
    \K f = \int k_\sigma(\cdot ,y) f(y) \dd y
    ,\quad
    \K_\mu f = \int k_\sigma(\cdot ,y) f(y) \dd \mu(y)
    .
\end{align}
Formally, as 
$k_\sigma(x, \cdot)\dd \mu \to\dd \delta_x $,
we have $\K_\mu \xi  \to \xi$ for any $\xi\in L^2(\mu)$.
Then, 
the \ours gradient flow equation
tends towards the PDE
\begin{align}
    \dot \mu = - \alpha\cdot \DIV (\mu\nabla \dFdmu) + \beta \mu\cdot\dFdmu
    .
\end{align}
Furthermore, we also have
$\|{\dFdmu}\|_\rkhs^2\to \|\dFdmu \|^2_{L^2(\mu )}$.
Hence, the conclusion follows.
\end{proof}

\begin{proof}
    [Proof of Theorem~\ref{thm:spherical-MMD-MMD-gfe}]
    We first recall that
    the first variation of the squared MMD energy is given by
    \begin{align}
        \frac{\delta }{\delta \mu  }
        \left(
            \frac12\mmd ^2(\mu, \nu)
        \right) 
        [\mu ]
        =
        \int 
        k(x, \cdot )
         (\mu-\nu )(\dd x)
        .
        \label{eq:mmd-first-var}
    \end{align}
    Plugging the first variation of the squared MMD energy into the gradient flow equation~\eqref{eq:mmd-gfe}, \eqref{eq:spherical-MMD-gfe},
    \eqref{eq:ikw-gfe-unreg}, and \eqref{eq:spherical-ours-gfe}, we obtain the desired flow equations in the theorem.
    The ODE solution is obtained by an elementary argument.
\end{proof}

\begin{proof}
    [Proof of Theorem~\ref{thm:mmd-energy-decay}]
    We take the time derivative of the energy and apply the chain rule formally and noting the gradient flow equation~\eqref{eq:ikw-gfe-unreg},
    \begin{multline*}
        \frac{\dd }{\dd t}
            F(\mu)
        =
        \langle
            \int 
            \dFdmu
             ,
             {\alpha}\cdot \operatorname{div}(\mu \nabla\dFdmu) -   {\beta} \cdot \K^{-1} \dFdmu
            \rangle_{L^2}
            \\
            =
 -\alpha\cdot \|\nabla \dFdmu\|^2_{L^2_{\mu}}
- \beta\cdot \| \dFdmu\|^2_{\rkhs}
\leq 
- \beta\cdot \| \dFdmu\|^2_{\rkhs}
            .
    \end{multline*}
Plugging in $F(\mu)=\frac12\mmd^2(\mu, \nu)$,
an elementary calculation shows that
\begin{align}
    \label{eq:mmd-energy-derivative}
    \frac{\dd }{\dd t}
    \left(
        \frac12\mmd^2(\mu, \nu)
    \right) 
    \leq 
    - \beta\cdot \| \int 
    k(x, \cdot )
     (\mu-\nu )(\dd x)\|^2_{\rkhs}
     =
        - 2 \beta\cdot \frac12\mmd^2(\mu, \nu)
    ,
\end{align}
which establishes the desired \Loj inequality
specialized to the squared MMD energy, which reads
\begin{align*}
   \alpha\cdot  \bigg\|\nabla \int 
    k(x, \cdot )
     (\mu-\nu )(\dd x)\bigg\|^2_{L^2_{\mu}}
    + 
    \beta\cdot  \bigg\|{\int 
    k(x, \cdot )
     (\mu-\nu )(\dd x)}\bigg\|^2_\rkhs
    \geq 
    c\cdot
    \frac12\mmd^2(\mu, \nu)
    \tag{\L{}oj}
    .
\end{align*}
By Gr\"onwall's lemma, exponential decay is established.

Furthermore, 
    plugging the first variation of the squared MMD energy into the gradient flow equation~\eqref{eq:spherical-MMD-gfe},
    the extra term in the spherical flow equation becomes
    \begin{align*}
        \frac{\int \K^{-1}\dFdmu}{\int \K^{-1}1}
        =
        \frac{ \int \K^{-1}\K(\mu-\pi)}{\int \K^{-1}1}
        =
        0
        .
    \end{align*}
    Hence, the coincidence is proved.
\end{proof}

\begin{proof}
    [Proof of Proposition~\ref{prop:kl-energy-decay}]
    By the \BE Theorem, we have the LSI~\eqref{eq:LSI} hold with 
    $c_{\textrm{LSI}}=2\lambda$.
    Taking the time derivative of the KL divergence energy along the S\ours gradient flow, we have
    \begin{multline*}
        \frac{\dd}{\dd t} \mathrm{D}_\mathrm{KL}(\mu_t|\pi)
        =
        \biggl\langle
        \nabla \log \frac{\dd\mu_t}{\dd\pi}
        ,
        \dot \mu_t
        \biggr\rangle_{L^2}
        \\
        =
        -\alpha\cdot\left\|\nabla \log \frac{\dd\mu_t}{\dd\pi}\right\|^2_{L^2_{\mu_t}}
        -\beta\cdot \left\|
        \log \frac{\dd\mu_t}{\dd\pi} - \frac{\int \K^{-1}\log \frac{\dd\mu_t}{\dd\pi}}{\int \K^{-1}1}
        \right\|^2_{\rkhs}
        \\
        \overset{\eqref{eq:LSI}}{\leq}
         -2\alpha \lambda\cdot \mathrm{D}_\mathrm{KL}(\mu_t|\pi)+0
        .
    \end{multline*}
    By Gr\"onwall's lemma, exponential convergence is established.
\end{proof}
Note that this result does not hold for the full IFT flows over non-negative measures as there exists no LSI globally on $\Mplus$.

\begin{remark}
    [Regularized inverse of the integral operator]
    Strictly speaking, 
    the integral operator $\K$ is compact and hence its inverse is unbounded.
    Using a viscosity-regularization techniques by \citet{efendiev2006rate},
    we can obtain the flow equation
    where the inverse is always well-defined, \ie 
    $\dot \mu =  \alpha\cdot \operatorname{div}(\mu \nabla\dFdmu) -   \beta \cdot (\K + \epsilon\cdot I)^{-1} \dFdmu$.
    This corresponds to an additive regularization of the kernel Gram matrix in practical computation.
\end{remark}

\paragraph*{A particle gradient descent algorithm for \ours gradient flows}
We use the notation $K_{XX}$ to denote the kernel Gram matrix $K_{XX} = [k(x^{\ell+1}_i, x^{\ell+1}_j)]_{i,j=1}^n$, $K_{X\bar X}$ for the cross kernel matrix $K_{X\bar X} = [k(x^{\ell+1}_i,  x^\ell_j)]_{i,j=1}^n$, etc.

\begin{algorithm}[hbt]
    \caption{A JKO-splitting for \ours particle gradient descent}
    {
        \begin{algorithmic}[1]
            \label{alg:jko-split}
        \REQUIRE{ }
        \FOR{$ \ell =  1$ to $T - 1$}
            \STATE{Compute the first variation of the energy $F$ at $\mu^\ell$: $        g^\ell =
            \dFdmuk
            $. 
Then,
                \begin{align*}
                    x_i^{\ell+1}
                    &\gets
                    x^\ell_i - \tau^\ell \cdot 
                    \nabla 
                    g^\ell
                    (x^\ell_i), \quad i=1,...,n
                    && \textrm{(Wasserstein step)}
                    \\
                    \alpha^{\ell+1}
                    &
                    \gets\argmin_{\alpha\in \Delta^n}
                    F( \sum_{i=1}^{n}\alpha_i\delta_{x_i^{\ell+1}})
                      + \frac1{2\eta^\ell}
                    \begin{bmatrix}
                    \alpha \\
                    \alpha^\ell
                    \end{bmatrix}^\top
                      \begin{pmatrix}K_{XX}& - K_{X \bar X} \\
                        - K_{X\bar X}& K_{\bar X\bar X}\end{pmatrix}
                        \begin{bmatrix}
                            \alpha \\
                            \alpha^\ell
                            \end{bmatrix}
                    && \textrm{(MMD step)}
                \end{align*}  
                }
        \ENDFOR
        \STATE Output the particle measure $ \widehat \mu^T = \sum_{i=1}^{n}\alpha^T_i\delta_{x_i^T}$.
    \end{algorithmic}
    }
    \end{algorithm}

\paragraph*{Implementing the MMD step}
The MMD step in the JKO-splitting scheme is a convex quadratic program with a simplex constraint, which can be formulated as
\begin{align}\label{mmd_st}
    \inf_{\beta\in \Delta^n} 
    \frac12\mmd^2( \sum_{i=1}^{n}\beta_i\delta_{x_i^{\ell+1}} , \pi)
        +
        \frac1{2\eta}
    \mmd^2(  \sum_{i=1}^{n}\beta_i\delta_{x^{\ell+1}_i},  \sum_{i=1}^{n}\alpha^\ell_i\delta_{x^{\ell+1}_i}).
\end{align}
We further expand the optimization objective
(multiplied by a
factor of $2\tau$ for convenience)
\begin{multline}
    \tau\cdot \left\|
     \sum_{i=1}^{n}\beta_i\phi({x_i^{k+1}})
     -
     \frac1m\sum_{j=1}^{m}\phi({y_j})
     \right\|^2
     +
    \left\|
        \sum_{i=1}^{n}\beta_i\phi({x_i^{k+1}})
        -
        \sum_{i=1}^{n}\alpha^k_i\phi({x^k_i})
        \right\|^2
        \\
= 
\tau\cdot 
\left(
    \beta^\top K_{XX} \beta
    -\frac2m\beta^\top K_{XY} \mathbf{1}
    +
    \frac1{m^2}\mathbf{1}^\top K_{YY} \mathbf{1}
    \right)
+
\left(
    \beta^\top K_{XX} \beta
    -2\beta^\top K_{X\bar{X}} \alpha
    +\alpha^{\top} K_{\bar X\bar X} \alpha
    \right)
    \\
=
\left(1+\tau\right)
\beta^\top K_{XX} \beta
-\frac{2\tau}m\beta^\top K_{XY} \mathbf{1}
-2\beta^\top K_{X\bar{X}} \alpha
+
\frac\tau{m^2}\mathbf{1}^\top K_{YY} \mathbf{1}
+\alpha^{\top} K_{\bar X\bar X} \alpha
    .
\end{multline}
Therefore, the MMD step in Algorithm~\ref{alg:jko-split} can be implemented as
a convex quadratic program with a simplex constraint.

\paragraph*{A particle gradient descent algorithm for the WFR flow of the MMD energy}
We provide the implementation details of the WFR flow of the MMD energy.
The goal is to simulate the PDE \eqref{eq:wfr-MMD-gfe}.
To the best of our knowledge, there has been no prior implementation of this flow. Nor is there a convergence guarantee.
Similar to the JKO-splitting scheme of the \ours flow in \eqref{eq:jko-split}, \eqref{eq:wfr-MMD-gfe} can be discretized using the two-step scheme
\begin{align*}
    \mu^{\ell+\frac12}
    &\gets\arg\min_{\mu\in\cal P} F(\mu ) + \frac1{2\tau}W_2^2(\mu, \mu^\ell)
    \quad\textrm{(Wasserstein step)} \\
    \mu^{\ell+1}
    &\gets\argmin_{\mu\in\cal P} F(\mu ) + \frac1{\eta}{\mathrm{KL}}(\mu, \mu^{\ell+\frac12})
    \quad\textrm{(KL step)}
\end{align*}
where the energy function $F$ is the squared MMD energy, $F(\mu)=\frac12\mmd^2(\mu, \pi)$.
Use the
explicit Euler scheme,
the KL step amounts to the entropic mirror descent.
In the optimization literature, this step can be implemented as multiplicative update of the weights (or density), i.e.,
suppose $x_i^{\ell+1}$ is the new particle location after the Wasserstein step, then we update the weights vector $\alpha$ via
\begin{align*}
    \alpha_i ^{\ell+1}
    \gets \alpha_i  ^{\ell} \cdot \exp
    \left(
        -\eta \cdot 
        \frac{\delta F}{\delta \mu}[\mu^\ell] (x^{\ell+1}_i)
    \right).
\end{align*}

\begin{ack}
    We thank Gabriel Peyré
    for the helpful comments regarding the practical algorithms for the JKO-splitting scheme.
    This project has received funding from the Deutsche Forschungsgemeinschaft (DFG, German Research Foundation) under Germany's Excellence Strategy – The Berlin Mathematics Research Center MATH+ (EXC-2046/1, project ID: 390685689)
    and from the priority programme "Theoretical Foundations of Deep Learning" (SPP 2298, project number: 543963649). 
    During part of the project,
    the research of E. Gladin was prepared within the framework of the HSE University Basic Research Program.
    \end{ack}
\bibliographystyle{abbrvnat}
\bibliography{ref}

\end{document}